\newcommand{\R}{\mathbb{R}}
\newtheorem{definition}{Definition}
\newcommand{\feature}{x}
\newcommand{\Feature}{\mathcal{X}}
\newcommand{\model}{m} 
\newcommand{\parameter}{\theta}
\newcommand{\Parameter}{\Theta}
\newcommand{\weight}{w}
\newcommand{\policy}{\pi}
\newcommand{\trajectories}{\mathcal{T}}
\newcommand{\trajectory}{\tau}
\newcommand{\dataset}{\mathcal{D}}
\title{Learning MDPs from Features: Predict-Then-Optimize for Sequential Decision Problems by Reinforcement Learning
}
\author{%
  Kai Wang \\
  Harvard University\\
  Cambridge, MA\\
  \texttt{kaiwang@g.harvard.edu} \\
  \And
  Sanket Shah \\
  Harvard University \\
  Cambridge, MA\\
  \texttt{sanketshah@g.harvard.edu} \\
  \And
  Haipeng Chen \\
  Harvard University \\
  Cambridge, MA\\
  \texttt{hpchen@seas.harvard.edu} \\
  \AND
  Andrew Perrault\thanks{This work was done while the author was at Harvard University.} \\
  The Ohio State University\\
  Columbus, OH\\
  \texttt{perrault.17@osu.edu } \\
  \And
  Finale Doshi-Velez \\
  Harvard University \\
  Cambridge, MA\\
  \texttt{finale@seas.harvard.edu} \\
  \And
  Milind Tambe \\
  Harvard University \\
  Cambridge, MA\\
  \texttt{milind\_tambe@harvard.edu} \\
}
\begin{document}

\maketitle

\begin{abstract}
In the predict-then-optimize framework, the objective is to train a predictive model, mapping from environment features to parameters of an optimization problem, which maximizes decision quality when the optimization is subsequently solved. Recent work on decision-focused learning shows that embedding the optimization problem in the training pipeline can improve decision quality and help generalize better to unseen tasks compared to relying on an intermediate loss function for evaluating prediction quality. We study the predict-then-optimize framework in the context of \emph{sequential} decision problems (formulated as MDPs) that are solved via reinforcement learning. In particular, we are given environment features and a set of trajectories from training MDPs, which we use to train a predictive model that generalizes to unseen test MDPs without trajectories. Two significant computational challenges arise in applying decision-focused learning to MDPs: (i) large state and action spaces make it infeasible for existing techniques to differentiate through MDP problems, and (ii) the high-dimensional policy space, as parameterized by a neural network, makes differentiating through a policy expensive. We resolve the first challenge by sampling provably unbiased derivatives to approximate and differentiate through optimality conditions, and the second challenge by using a low-rank approximation to the high-dimensional sample-based derivatives. We implement both Bellman--based and policy gradient--based decision-focused learning on three different MDP problems with missing parameters, and show that decision-focused learning performs better in generalization to unseen tasks.

\end{abstract}

\section{Introduction}
\emph{Predict-then-optimize}~\cite{elmachtoub2021smart,balghiti2019generalization} is a framework for solving optimization problems with unknown parameters. Given such a problem, we first train a predictive model to predict the missing parameters from problem features. Our objective is to maximize the resulting decision quality when the optimization problem is subsequently solved with the predicted parameters~\cite{sahinidis2004optimization,ning2019optimization}.
Recent work on the \emph{decision-focused learning} approach~\cite{donti2017task,wilder2019melding} embeds the optimization problem~\cite{amos2018differentiable,agrawal2019differentiable,bai2019deep} into the training pipeline and trains the predictive model end-to-end to optimize the final decision quality.
Compared with a more traditional ``two-stage'' approach which maximizes the predictive accuracy of the model (rather than the final decision quality), the decision-focused learning approach can achieve a higher solution quality and generalize better to unseen tasks.

This paper studies the predict-then-optimize framework in \emph{sequential} decision problems, formulated as Markov decision processes (MDPs), with unknown parameters. In particular, at training time, we are given trajectories and environment features from ``training MDPs.''
Our goal is to learn a predictive model which maps from environment features to missing parameters based on these trajectories that generalizes to unseen test MDPs that have features, but not trajectories.
The resulting ``predicted'' training and test MDPs are solved using deep reinforcement learning (RL) algorithms, yielding policies that are then evaluated by offline off-policy evaluation (OPE) as shown in Figure~\ref{fig:flowchart}. This fully offline setting is motivated by real-world applications such as wildlife conservation and tuberculosis treatment where no simulator is available.
However, such domains offer past ranger patrol trajectories and environmental features of individual locations from conservation parks for generalization to other unpatrolled areas. These settings differ from those considered in transfer-RL~\cite{ng1999policy,taylor2009transfer,lazaric2012transfer,schaul2015universal} and meta-RL~\cite{wang2016learning,duan2016rl,finn2017model,zintgraf2019varibad,wang2020offline} because we generalize across different MDPs by explicitly predicting the mapping function from features to missing MDPs parameters, while transfer/meta RL achieve generalization by learning hidden representation of different MDPs implicitly with trajectories. 



The main contribution of this paper is to extend the decision-focused learning approach to MDPs with unknown parameters, embedding the MDP problems in the predictive model training pipeline.
To perform this embedding, we study two common types of optimality conditions in MDPs: a Bellman-based approach where mean-squared Bellman error is minimized, and a policy gradient-based approach where the expected cumulative reward is maximized.
We convert these optimality conditions into their corresponding Karush--Kuhn--Tucker (KKT) conditions, where we can backpropagate through the embedding by differentiating through the KKT conditions.
However, existing techniques from decision-focused learning and differentiating through KKT conditions do not directly apply as the size of the KKT conditions of sequential decision problems grow linearly in the number of states and actions, which are often combinatorial or continuous and thus become intractable.

We identify and resolve two computational challenges in applying decision-focused learning to MDPs, that come up in both optimality conditions: (i) the large state and action spaces involved in the optimization reformulation make differentiating through the optimality conditions intractable and (ii) the high-dimensional policy space in MDPs, as parameterized by a neural network, makes differentiating through a policy expensive.
To resolve the first challenge, we propose to sample an estimate of the first-order and second-order derivatives to approximate the optimality and KKT conditions. We prove such a sampling approach is unbiased for both types of optimality conditions. Thus, we can differentiate through the approximate KKT conditions formed by sample-based derivatives.
Nonetheless, the second challenge still applies---the sampled KKT conditions are expensive to differentiate through due to the dimensionality of the policy space when model-free deep RL methods are used. Therefore, we propose to use a low-rank approximation to further approximate the sample-based second-order derivatives. This low-rank approximation reduces both the computation cost and the memory usage of differentiating through KKT conditions.

We empirically test our decision-focused algorithms on three settings:
a grid world with unknown rewards, and snare-finding and Tuberculosis treatment problems where transition probabilities are unknown. 
Decision-focused learning  achieves better OPE performance in unseen test MDPs than two-stage approach, and our 
low-rank approximations significantly scale-up decision-focused learning.


\section*{Related Work}
\paragraph{Differentiable optimization}~\label{sec:related-work-differentiable-optimization}
Amos et al.~\cite{amos2017optnet} propose using a quadratic program as a differentiable layer and embedding it into deep learning pipeline, and Agrawal et al.~\cite{agrawal2019differentiable} extend their work to convex programs.
Decision-focused learning~\cite{donti2017task,wilder2019melding} focuses on the predict-then-optimize~\cite{elmachtoub2021smart,balghiti2019generalization} framework by embedding an optimization layer into training pipeline, where the optimization layers can be convex~\cite{donti2017task}, linear~\cite{wilder2019melding,mandi2020interior}, and non-convex~\cite{perrault2020end,wang2020automatically}.
Unfortunately, these techniques are of limited utility for sequential decision problems because their formulations grow linearly in the number of states and actions and thus differentiating through them quickly becomes infeasible.
Amos et al.~\cite{amos2018differentiable} avoid this issue by studying model-predictive control but limited to quadratic-form actions, reducing the dimensionality.
Karkus et al.~\cite{karkus2019differentiable} differentiate through an algorithm by unrolling and relaxing all the strict operators by soft operators.
Futoma et al.~\cite{futoma2020popcorn} deal with large optimality conditions by differentiating through the last step of the value-iteration algorithm only.
Instead, our approach does not rely on any MDP solver structure.
We combine sampling and a low-rank approximation to form an unbiased estimate of the optimality conditions to differentiate through, and show that the approach of Futoma et al.~\cite{futoma2020popcorn} is included in ours as a special case.

\paragraph{Predict-then-optimize and offline reinforcement learning}
The idea of planning under a predicted MDP arises in model-based RL as \textit{certainty equivalence}~\cite{kumar2015stochastic}. It has been  extended to offline settings~\cite{kidambi2020morel,yu2021combo}, who learn a pessimistic MDP before solving for the policy.
Our setting differs because of the presence of features and train-test split---our test MDPs are completely fresh \textit{without any trajectories}. Our setting also resembles meta RL (e.g., \cite{wang2016learning,duan2016rl,finn2017model,zintgraf2019varibad,wang2020offline}) and transfer RL (e.g., \cite{ng1999policy,taylor2009transfer,lazaric2012transfer,schaul2015universal}.)
Meta RL focuses on training a ``meta policy'' for a distribution of tasks (MDPs), leveraging trajectories for each. Transfer RL works by extracting transferable knowledge from source MDPs to target MDPs using trajectories. In contrast to these two paradigms, ours explicitly trains a predictive model (which maps problem features to missing MDP parameters) to generalize knowledge learned from the training set to the testing set using \textit{problem features, not trajectories}.

\section{Problem Statement}\label{sec:problem}

\begin{figure}
    \centering
    \includegraphics[width=0.98\linewidth]{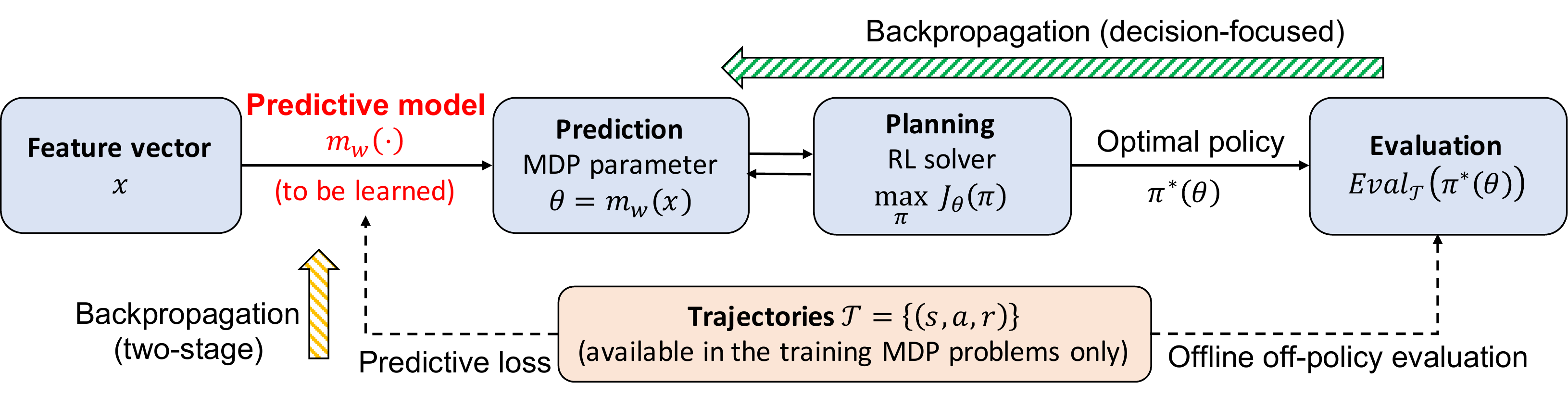}
    \caption{We consider learning a predictive model to map from features to unknown MDP parameters and obtaining a policy by solving the predicted MDP with RL. Two-stage learning learns the predictive model by minimizing a predictive loss function, whereas decision-focused learning is trained end-to-end to maximize the final off-policy evaluation performance.}
    \label{fig:flowchart}
\end{figure}

In this paper, we consider learning a predictive model to infer the missing parameters in a sequential decision-making task (formulated as MDPs) using the predict-then-optimize framework.
Each MDP is defined by a tuple $(\mathcal{S}, \boldsymbol{s}_0, \mathcal{A}, T, R)$ with an initial state $\boldsymbol{s}_0$, a possibly infinite state space $\mathcal{S}$ and possibly infinite action space $\mathcal{A}$.
We assume some parameters are missing in each MDP, which could be any portion of the transition function $T$ and the reward function $R$. We denote the missing parameters vector by $\parameter^*$.
Additionally, we assume there are problem features $\feature$ associated with each MDP, where $(\theta^*, \feature)$ is correlated and drawn from the same unknown, but fixed, distribution\footnote{Examples of the missing parameters $\parameter^*$ include the poaching risk of different locations in wildlife conservation and the transition probability of patients' healthiness in healthcare problems, where the corresponding problem features are terrain features of different locations and the characteristics of different patients that are correlated to the missing parameters, respectively. These correlated features allow us to predict the missing parameters even if we do not have any trajectories of the MDP.}.

We are given a set of training MDPs and a set of test MDPs, each with missing parameters $\theta^*$ and features $\feature$. 
Each training MDP is accompanied by a set of trajectories $\trajectories$ performed by a behavior policy $\policy_{\text{beh}}$, consisting of a sequence of states, actions and rewards. In the test MDPs, trajectories from the behavior policy are hidden at test time. These testing MDPs are considered fresh instances that we have to generate a policy without using any trajectories.
Thus, at training time, we learn a predictive model $\model_\weight$ to map from features to missing parameters; at test time, we apply the same model to make predictions and plan accordingly without using trajectories.

Formally, our goal is to learn a predictive model $\model_\weight$ to predict the missing parameters $\parameter = \model_\weight(\feature)$.
The predicted parameters are used to solve the test MDPs, yielding the policy $\policy^*(\model_\weight(\feature))$ where our offline off-policy evaluation (OPE) metric is maximized. This framework is illustrated in Figure~\ref{fig:flowchart}. 

\paragraph{Offline off-policy evaluation} We evaluate a policy $\policy$ in a fully offline setting with trajectories $\trajectories = \{ \trajectory_i \}, \trajectory_i = (s_{i1}, a_{i1}, r_{i1}, \cdots, s_{ih}, a_{ih}, r_{ih})$ generated from the MDP using behavior policy $\policy_{\text{beh}}$. We use an OPE metric used by Futoma et~al.~\cite{futoma2020popcorn} --- we evaluate a policy $\policy$ and trajectories $\trajectories$ as:
\begin{align}\label{eqn:ope}
    \text{Eval}_\trajectories(\policy) & \coloneqq V^{\text{CWPDIS}}(\policy) - \lambda_{\text{ESS}} / \sqrt{\text{ESS}(\policy)}
\end{align}
where $V^{\text{CWPDIS}}(\policy) \! \coloneqq \! \sum\nolimits_{t=1}^h \gamma^t \frac{\sum\nolimits_{i} r_{it }\rho_{it}(\policy) }{\sum\nolimits_{i} \rho_{it}(\policy)}$ and $\text{ESS}(\policy) \! \coloneqq \! \sum\nolimits_{t=1}^h \frac{(\sum\nolimits_{i} \rho_{it})^2}{\sum\nolimits_{i} \rho_{it}^2}$, and $\rho_{it}(\policy)$ is the ratio of the proposed policy and the behavior policy likelihoods up to time t: $\rho_{it}(\policy) \coloneqq \prod\nolimits_{t'=1}^t \frac{\policy(a_{it'}|s_{it'})}{\policy_{\text{beh}}(a_{it'}|s_{it'})}$.

\paragraph{Optimization formulation}
Given a set of training features and trajectories $D_\text{train} = \{(\feature_i, \trajectories_i)\}_{i \in I_{\text{train}}}$, our goal is to learn a predictive model $\model_\weight$ to optimize the training performance:
\begin{align}\label{eqn:optimization-problem}
    \max\nolimits_\weight \quad \mathop{\mathbb{E}}\nolimits_{(\feature, \trajectories) \in D_{\text{train}}} \left[ \text{Eval}_\trajectories(\policy^*(\model_\weight(\feature))) \right]
\end{align}
The testing performance is evaluated on the unseen test set $D_\text{test} = \{(\feature_i, \trajectories_i)\}_{i \in I_{\text{test}}}$ with trajectories hidden from training, and only used for evaluation: $\mathop{\mathbb{E}}\nolimits_{(\feature, \trajectories) \in D_{\text{test}}} \left[ \text{Eval}_\trajectories(\policy^*(\model_\weight(\feature))) \right]$.


\section{Two-stage Learning}\label{sec:two-stage}
To learn the predictive model $\model_\weight$ from trajectories, the standard approach is to minimize an expected predictive loss, which is defined by comparing the prediction $\parameter = \model_\weight(\feature)$ with the trajectories $\trajectories$:
\begin{align}\label{eqn:two-stage-loss}
    & \min\nolimits_\weight \quad \mathop{\mathbb{E}}\nolimits_{(\feature, \trajectories) \sim \mathcal{D}_{\text{train}}} \mathcal{L}(\parameter, \trajectories)
    & \text{where} \quad \mathcal{L}(\parameter, \trajectories) = \mathop{\mathbb{E}}\nolimits_{\trajectory \sim \trajectories} \ell_\parameter(\trajectory), \quad \parameter = \model_\weight(\feature)
\end{align}
For example, when the rewards are missing, the loss could be the squared error between the predicted rewards and the actual rewards we see in the trajectories for each individual step. When the transition probabilities are missing, the loss could be defined as the negative log-likelihood of seeing the trajectories in the training set.

In the first stage, to train the predictive model, we run gradient descent to minimize the loss function defined in Equation~\eqref{eqn:two-stage-loss} and make prediction on the model parameter $\parameter = \model_\weight(\feature)$ of each problem. In the second stage, we solve each MDP problem with the predicted parameter $\parameter$ using an RL algorithm to generate the optimal policy $\policy^*(\parameter)$. However, predictive loss and the final evaluation metric are commonly misaligned especially in deep learning problems with imbalanced data~\cite{huang2019addressing,lambert2020objective,johnson2019survey,branco2016survey}.
This motivates us to learn the predictive model end-to-end and therefore avoid the misalignment.

\section{Decision-focused Learning}\label{sec:decision-focused}
In this section, we present our main contribution, decision-focused learning in sequential decision problems, as illustrated in Figure~\ref{fig:flowchart}.
Decision-focused learning integrates an MDP problem into the training pipeline to directly optimize the final performance.
Instead of relying on a predictive loss to train the predictive model $\model_\weight$, we can directly optimize the objective in Equation~\eqref{eqn:optimization-problem} by running end-to-end gradient descent to update the predictive model $\model_\weight$:
\begin{align}\label{eqn:decision-focused-gradient}
    \frac{d ~ \text{Eval}(\policy^*)}{d \weight} = \frac{d ~ \text{Eval}(\policy^*)}{d \policy^*} \frac{d \policy^*}{d \parameter} \frac{d \parameter}{d \weight}
\end{align}
We assume the policy $\policy^*(\parameter)$ is either stochastic and smooth with respect to the change in the parameter $\parameter$, which is common in settings with continuous state or action spaces, or that an appropriate regularization term~\cite{haarnoja2017reinforcement,haarnoja2018soft} is used to improve the smoothness of the policy. More discussions about the smoothness can be found in Appendix~\ref{sec:soft-rl-solver}.

This gradient computation requires us to back-propagate from the final evaluation through the MDP layer to the predictive model $\model_\weight$ that we want to update.
The major challenge in Equation~\eqref{eqn:decision-focused-gradient} is to compute $\frac{d \policy^*}{d \parameter}$, which involves differentiating through an MDP layer solved by an RL algorithm. In the following section, we first discuss two different optimality conditions in MDPs, which are later used to convert into KKT conditions and differentiate through to compute $\frac{d \policy^*}{d \parameter}$. We then discuss two computational challenges associated with the derivative computation.

\subsection{Optimality Conditions in MDPs}\label{sec:optimality-conditions}
When the predicted model parameter $\parameter = \model_\weight(\feature)$ is given, the MDP can be solved by any RL algorithm to get an optimal policy $\policy^*$.
Here we discuss two common optimality conditions in MDPs, differing by the use of policy gradient or Bellman equation:
\begin{definition}[Policy gradient-based optimality condition]\label{def:policy-gradient-optimality-condition}
Defining $J_\parameter(\policy)$ to be the expected cumulative reward under policy $\policy$, the optimality condition of the optimal policy $\policy^*$ is:
\begin{align}\label{eqn:policy-gradient-optimality-condition}
    & \policy^* = \arg\max\nolimits_{\policy} J_{\parameter}(\policy) & \text{where} \quad J_{\parameter}(\policy) \coloneqq \mathop{\mathbb{E}}\nolimits_{\tau \sim \policy, \parameter} G_\parameter(\trajectory)
\end{align}
where $G_\parameter(\trajectory)$ is the discounted value of trajectory $\trajectory$ given parameter $\parameter$, and the expectation is taken over the trajectories following the optimal policy and transition probability (as part of $\parameter$).
\end{definition}

\begin{definition}[Bellman-based optimality condition]\label{def:bellman-optimality-condition}
Defining $J_\parameter(\policy)$ to be the mean-squared Bellman error\footnote{We use the same notation $J$ to denote both the expected cumulative reward and the expected Bellman error to simplify the later analysis of decision-focused learning.} under policy $\policy$, the optimality condition of the optimal policy $\policy^*$ (valuation function) is:
\begin{align}\label{eqn:bellman-optimality-condition}
    & \policy^* = \arg\min_\policy J_\parameter(\policy) & \text{where} \quad J_\parameter(\policy) \coloneqq \mathop{\mathbb{E}}\nolimits_{\trajectory \sim \policy, \parameter} \frac{1}{2} \delta_\parameter^2(\trajectory, \policy)
\end{align}
where $\delta_\parameter(\trajectory, \policy) = \sum\nolimits_{(s,a,s') \in \trajectory} Q_\policy(s,a) - R_\parameter(s,a) - \gamma \mathop{\mathbb{E}}\nolimits_{a' \sim \policy} Q_\policy(s', a')$ is the total Bellman error of a trajectory $\trajectory$, and each individual term $\delta_\parameter(\trajectory, \policy)$ can depend on the parameter $\parameter$ because the Bellman error depends on the immediate reward $R_\parameter$, which can be a part of the MDP parameter $\parameter$. The expectation in Equation~\eqref{eqn:bellman-optimality-condition} is taken over all the trajectories generated from policy $\policy$ and transition probability (as part of $\parameter$).
\end{definition}



\subsection{Backpropagating Through Optimality and KKT Conditions}
To compute the derivative of the optimal policy $\policy^*(\parameter)$ in an MDP with respect to the MDP parameter $\parameter$, we differentiate through the KKT conditions of the corresponding optimality conditions:
\begin{definition}[KKT Conditions]
Given objective $J_\parameter(\policy)$ in an MDP problem, since the policy parameters are unconstrained, the necessary KKT conditions can be written as: $\nabla_\policy J_{\parameter}(\policy^*) = 0$.
\end{definition}

In particular, computing the total derivative of KKT conditions gives:
\begin{align}
    0 &= \frac{d}{d \parameter} \nabla_\policy J_{\parameter}(\policy^*) \! 
    = \! \frac{\partial}{\partial \parameter} \nabla_{\policy} J_{\parameter}(\policy^*) + \frac{\partial}{\partial \policy} \nabla_{\policy} J_{\parameter}(\policy^*) \frac{d \policy^*}{d \parameter} \!
    = \! \nabla^2_{\parameter \policy} J_{\parameter}(\policy^*) \! + \! \nabla^2_{\policy} J_{\parameter}(\policy^*) \frac{d \policy^*}{d \parameter} \nonumber \\
    & \Longrightarrow \frac{d \policy^*}{d \parameter} = -(\nabla^2_{\policy} J_{\parameter}(\policy^*))^{-1} \nabla^2_{\parameter \policy} J_{\parameter}(\policy^*) \label{eqn:implicit-function-theorem}
\end{align}
We can use Equation~\eqref{eqn:implicit-function-theorem} to replace the term $\frac{d \policy^*}{d \parameter}$ in Equation~\eqref{eqn:decision-focused-gradient} to compute the full gradient to back-propagate from the final evaluation to the predictive model parameterized by $\weight$:
\begin{align}\label{eqn:full-gradient}
    \frac{d ~ \text{Eval}(\policy^*)}{d \weight} = - \frac{d ~ \text{Eval}(\policy^*)}{d \policy^*} (\nabla^2_{\policy} J_{\parameter}(\policy^*))^{-1} \nabla^2_{\parameter \policy} J_{\parameter}(\policy^*) \frac{d \parameter}{d \weight}
\end{align}

\subsection{Computational Challenges in Backward Pass}
Unfortunately, although we can write down and differentiate through the KKT conditions analytically, we cannot explicitly compute the second-order derivatives $\nabla^2_{\policy} J_{\parameter}(\policy^*)$ and $\nabla^2_{\parameter\policy} J_{\parameter}(\policy^*)$ in Equation~\eqref{eqn:full-gradient} due to the following two challenges:
    
\paragraph{Large state and action spaces involved in optimality conditions}
The objectives $J_\parameter(\policy^*)$ in Definition~\ref{def:policy-gradient-optimality-condition} and Definition~\ref{def:bellman-optimality-condition} involve an expectation over all possible trajectories, which is essentially an integral and is intractable when either the state or action space is continuous. This prevents us from explicitly verifying optimality and writing down the two derivatives $\nabla^2_\policy J_\parameter(\policy^*)$ and $\nabla^2_{\parameter \policy} J_\parameter(\policy^*)$.

\paragraph{High-dimensional policy space parameterized by neural networks}
In MDPs solved by model-free deep RL algorithms, the policy space $\policy \in \Pi$ is often parameterized by a neural network, which has a significantly larger number of variables than standard optimization problems. This large dimensionality makes the second-order derivative $\nabla^2_\policy J_\parameter(\policy^*) \in \R^{\dim({\policy}) \times \dim({\policy})}$ intractable to compute, store, or invert.

\section{Sampling Unbiased Derivative Estimates}\label{sec:sampling-approach}
In both policy gradient--based and Bellman--based optimality conditions, the objective is implicitly given by an expectation over all possible trajectories, which could be infinitely large when either state or action space is continuous.
This same issue arises when expressing such an MDP as a linear program --- there are infinitely many constraints, making it intractable to differentiate through.

Inspired by the policy gradient theorem, although we cannot compute the exact gradient of the objective, we can sample a set of trajectories $\trajectory = \{ s_1, a_1,r_1, \dots, s_h, a_h, r_h \}$ from policy $\policy$ and model parameter $\parameter$ with finite time horizon $h$. Denoting $p_\parameter(\trajectory, \policy)$ to be the likelihood of seeing trajectory $\trajectory$, we can compute an unbiased derivative estimate for both optimality conditions:
\begin{restatable}[Policy gradient-based unbiased derivative estimate]{theorem}{policyGradientUnbiased}\label{thm:policy-gradient-unbiased}
We follow the notation of Definition~\ref{def:policy-gradient-optimality-condition} and define $\Phi_\parameter(\trajectory, \policy) = \sum\nolimits_{i=1}^h \sum\nolimits_{j=i}^h \gamma^j R_\parameter(s_j, a_j) \log \policy(a_i | s_i)$. We have:
\begin{align}\label{eqn:policy-gradient-sampling}
        \nabla_\policy J_\parameter(\policy) = \mathop{\mathbb{E}}\nolimits_{\trajectory \sim \policy, \parameter} \left[ \nabla_\policy \Phi_\parameter(\trajectory, \policy) \right] \ \Longrightarrow \
        \begin{aligned}
            \nabla^2_{\policy} J_\parameter(\policy) &=  \mathop{\mathbb{E}}\nolimits_{\trajectory \sim \policy, \parameter} \left[ \nabla_\policy \Phi_\parameter \cdot \nabla_\policy \log p_{\theta}^\top + \nabla^2_\policy \Phi_\parameter \right] \\
            \nabla^2_{\parameter \policy} J_\parameter(\policy) &= \mathop{\mathbb{E}}\nolimits_{\trajectory \sim \policy, \parameter} \left[ \nabla_\policy \Phi_\parameter \cdot \nabla_\parameter \log p_{\theta}^\top + \nabla^2_{\parameter \policy} \Phi_\parameter \right]
        \end{aligned}
\end{align}
\end{restatable}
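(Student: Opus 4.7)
The plan is to apply the log-derivative (REINFORCE) trick twice: once to obtain the gradient identity and once to lift it to the two Hessian identities. Write $p_\parameter(\trajectory, \policy) = p(s_1) \prod_{t=1}^h \policy(a_t \mid s_t)\, T_\parameter(s_{t+1} \mid s_t, a_t)$, so that $\nabla_\policy \log p_\parameter = \sum_{i=1}^h \nabla_\policy \log \policy(a_i \mid s_i)$ (the initial-state and transition terms drop out), and let $G_\parameter(\trajectory) = \sum_{j=1}^h \gamma^j R_\parameter(s_j, a_j)$. Throughout I would assume one may differentiate under the integral sign, which is standard given the smoothness of $\policy, T_\parameter, R_\parameter$ and the finite horizon $h$.

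First I would establish the gradient identity $\nabla_\policy J_\parameter(\policy) = \mathop{\mathbb{E}}\nolimits[\nabla_\policy \Phi_\parameter]$. Pushing $\nabla_\policy$ inside the integral defining $J$ and using $\nabla p_\parameter = p_\parameter \nabla \log p_\parameter$ gives $\nabla_\policy J_\parameter(\policy) = \mathop{\mathbb{E}}\nolimits[G_\parameter \cdot \nabla_\policy \log p_\parameter] = \sum_{i,j} \gamma^j \mathop{\mathbb{E}}\nolimits[R_\parameter(s_j,a_j)\, \nabla_\policy \log \policy(a_i \mid s_i)]$. By the definition of $\Phi_\parameter$, it suffices to show that the $j < i$ cross-terms vanish in expectation. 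For such a term I would condition on the history through time $j$: then $R_\parameter(s_j,a_j)$ is measurable and pulls outside, while the remaining inner conditional expectation reduces, via the score-function identity $\int \policy(a_i\mid s_i)\, \nabla_\policy \log \policy(a_i\mid s_i)\, d a_i = \nabla_\policy \! \int \policy(a_i\mid s_i)\, d a_i = 0$, to zero. The tower property kills the term, leaving precisely the $j \geq i$ contributions, which sum to $\mathop{\mathbb{E}}\nolimits[\nabla_\policy \Phi_\parameter]$.

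To obtain both Hessian formulas in one stroke, I would differentiate the vector identity $\nabla_\policy J_\parameter = \int p_\parameter \nabla_\policy \Phi_\parameter\, d\trajectory$ a second time. The product rule together with one more application of $\nabla p_\parameter = p_\parameter \nabla \log p_\parameter$ yields
\begin{align*}
    \nabla_v \mathop{\mathbb{E}}\nolimits_{\trajectory \sim \policy, \parameter} \! \left[ \nabla_\policy \Phi_\parameter \right] = \mathop{\mathbb{E}}\nolimits_{\trajectory \sim \policy, \parameter} \! \left[ \nabla_\policy \Phi_\parameter \cdot (\nabla_v \log p_\parameter)^\top + \nabla_v \nabla_\policy \Phi_\parameter \right]
\end{align*}
for either $v = \policy$ or $v = \parameter$, where the outer product is interpreted with rows indexed by $\policy$ and columns by $v$ --- the shape forced by the placement of $\nabla^2_{\parameter \policy} J$ in Equation~\eqref{eqn:implicit-function-theorem}. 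Substituting $v = \policy$ recovers the first claimed Hessian expression and $v = \parameter$ recovers the second; note that $\Phi_\parameter$ depends on $\parameter$ only through the smooth reward $R_\parameter$, so $\nabla^2_{\parameter \policy} \Phi_\parameter$ is well-defined.

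The main obstacle I expect is the causality/tower argument in the first step: one must justify carefully that the score-function identity composes with the Markov factorization of $p_\parameter$ so that all $j < i$ cross-terms drop. Everything after that is bookkeeping --- verifying differentiation under the integral (routine by dominated convergence given finite $h$ and bounded, smooth $R_\parameter, T_\parameter, \policy$) and lining up the outer-product conventions so the resulting matrices have the shapes demanded by Equation~\eqref{eqn:full-gradient}.
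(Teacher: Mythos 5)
Your proposal is correct and reaches both the first-order identity and the two Hessian formulas; the second half (differentiating the identity $\nabla_\policy J_\parameter = \int p_\parameter \nabla_\policy \Phi_\parameter\, d\trajectory$ once more in $v \in \{\policy,\parameter\}$ via the product rule and one more log-derivative trick) is essentially identical to the paper's proof. Where you diverge is the causality step in the first part. The paper never produces the $j<i$ cross-terms at all: it first writes $J_\parameter(\policy)=\sum_j \mathop{\mathbb{E}}_{\trajectory_j}[\gamma^j R_\parameter(s_j,a_j)]$, marginalizing each reward term down to the partial trajectory $\trajectory_j$ before applying $\nabla p = p\nabla\log p$, so that only $\log\policy(a_i\mid s_i)$ with $i\le j$ ever appears. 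You instead differentiate the full-trajectory expectation, obtain all $(i,j)$ pairs, and kill the $j<i$ terms with the score-function identity plus the tower property. Both are standard and valid; the paper's route buys a cleaner derivation with no cancellation to justify, while yours makes the reward-to-go structure of $\Phi_\parameter$ more transparent at the cost of one extra conditioning argument. One small point to tighten: for a cross-term with $j<i$ you should condition on the history up to and including $s_i$ (but before $a_i$) --- not merely ``through time $j$'' --- so that $R_\parameter(s_j,a_j)$ is measurable and the inner integral $\int \policy(a_i\mid s_i)\nabla_\policy\log\policy(a_i\mid s_i)\,da_i = 0$ applies; the identity you wrote is exactly this inner conditional expectation, so the fix is purely one of wording.
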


\begin{restatable}[Bellman-based unbiased derivative estimate]{theorem}{bellmanUnbiased}\label{thm:bellman-unbiased}
We follow the notation in Definition~\ref{def:bellman-optimality-condition} to define $J_\parameter(\policy) = \frac{1}{2} \mathop{\mathbb{E}}\nolimits_{\trajectory \sim \policy, \parameter} \left[ \delta_\parameter^2(\trajectory, \policy) \right]$. We have:
\begin{align}\label{eqn:bellman-sampling}
& \nabla_\policy J_\parameter(\policy) = \mathop{\mathbb{E}}\nolimits_{\trajectory \sim \policy, \parameter} \left[ \delta \nabla_\policy \delta + \frac{1}{2} \delta^2 \nabla_\policy \log p_\parameter \right]
\ \Longrightarrow \ \nabla^2_{\policy} J_\parameter(\policy) =  \mathop{\mathbb{E}}\nolimits_{\trajectory \sim \policy, \parameter} \left[ \nabla_\policy \delta \nabla_\policy \delta^\top + O(\delta) \right] \nonumber \\
& \nabla^2_{\parameter \policy} J_\parameter(\policy) \! = \! \mathop{\mathbb{E}}\nolimits_{\trajectory \sim \policy, \parameter} \left[ \nabla_\policy \delta \nabla_\parameter \delta^\top \! + \! \left( \nabla_\policy \delta \nabla_\parameter \log p_\parameter^\top \! + \! \nabla_\policy \log p_\parameter \nabla_\parameter \delta^\top \! + \! \nabla^2_{\parameter \policy} \delta \right) \delta + O(\delta^2) \right]
\end{align}
\end{restatable}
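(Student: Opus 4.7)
The plan is to derive all three identities from a single tool: the score-function (log-derivative) trick $\nabla p_\parameter = p_\parameter \nabla \log p_\parameter$, combined with the product rule, applied to the integral form $J_\parameter(\policy) = \frac{1}{2}\int p_\parameter(\trajectory,\policy)\,\delta_\parameter^2(\trajectory,\policy)\,d\trajectory$. Throughout, I would keep the dependence of the trajectory density $p_\parameter(\trajectory,\policy)$ on both the policy (through action selection) and the model parameters (through transitions and possibly rewards) explicit, and assume standard regularity so that $\nabla$ and $\int$ can be exchanged.

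First, I would establish the $\nabla_\policy J$ identity. Differentiating under the integral gives one contribution where $\nabla_\policy$ hits $\delta^2$, yielding $\delta\,\nabla_\policy \delta$, and one where it hits $p_\parameter$, converted back to an expectation via the log-trick to yield $\tfrac{1}{2}\delta^2\,\nabla_\policy \log p_\parameter$. This matches the first line of \eqref{eqn:bellman-sampling} exactly, and it also supplies the base expression $\nabla_\policy J = \mathop{\mathbb{E}}[g]$ with $g := \delta\,\nabla_\policy \delta + \tfrac{1}{2}\delta^2\,\nabla_\policy \log p_\parameter$ that both second-order computations will start from.

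Next, for $\nabla_\policy^2 J$ I would apply the log-trick a second time: $\nabla_\policy \mathop{\mathbb{E}}[g] = \mathop{\mathbb{E}}[\nabla_\policy g + g\,\nabla_\policy \log p_\parameter^\top]$. Expanding $\nabla_\policy g$ by the product rule produces a single $\delta$-free term $\nabla_\policy \delta\,\nabla_\policy \delta^\top$, together with $\delta\,\nabla_\policy^2\delta$ and $\delta\,\nabla_\policy \delta\,\nabla_\policy \log p_\parameter^\top$ and a $\delta^2$ term from differentiating $\tfrac{1}{2}\delta^2 \nabla_\policy \log p_\parameter$; the outer factor $g\,\nabla_\policy \log p_\parameter^\top$ similarly contributes only $O(\delta)$ and $O(\delta^2)$ terms. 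Collecting yields $\mathop{\mathbb{E}}[\nabla_\policy \delta\,\nabla_\policy \delta^\top + O(\delta)]$ as claimed. For $\nabla^2_{\parameter\policy} J$, I would apply $\nabla_\parameter$ to the same $\mathop{\mathbb{E}}[g]$, giving $\mathop{\mathbb{E}}[\nabla_\parameter g + g\,\nabla_\parameter \log p_\parameter^\top]$. Exactly one term, $\nabla_\policy \delta\,\nabla_\parameter \delta^\top$, is $\delta$-free (coming from $\nabla_\parameter(\delta\,\nabla_\policy \delta)$); three terms carry a single factor of $\delta$, namely $\delta\,\nabla^2_{\parameter\policy}\delta$ from the same product rule, $\delta\,\nabla_\policy \log p_\parameter\,\nabla_\parameter \delta^\top$ from differentiating $\tfrac{1}{2}\delta^2\,\nabla_\policy \log p_\parameter$, and $\delta\,\nabla_\policy \delta\,\nabla_\parameter \log p_\parameter^\top$ from $g\,\nabla_\parameter \log p_\parameter^\top$; the remaining two terms contain a $\delta^2$ factor and are absorbed into the $O(\delta^2)$ remainder. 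This reproduces the second line of \eqref{eqn:bellman-sampling}.

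The main obstacle is bookkeeping rather than any deep idea: the repeated application of $\nabla p = p\,\nabla \log p$ produces a proliferation of cross terms mixing $\nabla_\policy \delta$, $\nabla_\parameter \delta$, $\nabla_\policy \log p_\parameter$, and $\nabla_\parameter \log p_\parameter$, and one must carefully track which terms are $O(1)$, $O(\delta)$, and $O(\delta^2)$ in order to match the stated grouping. A minor technical point is justifying interchange of differentiation and integration over trajectory space; this follows from standard dominated-convergence assumptions on the smoothness of $\policy$, $p_\parameter$, and $R_\parameter$, which are already implicit in the decision-focused learning setup.
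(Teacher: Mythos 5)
Your proposal is correct and follows essentially the same route as the paper's proof: differentiate under the integral, apply the score-function identity $\nabla p_\parameter = p_\parameter \nabla \log p_\parameter$ at each order, and group the resulting cross terms by their power of $\delta$. The term-by-term accounting you give for $\nabla^2_{\parameter\policy} J$ (one $\delta$-free term, three $O(\delta)$ terms, two $O(\delta^2)$ terms) matches the paper's derivation exactly.
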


For the latter, we apply the fact that at the near-optimal policy, the Bellman error is close to $0$ and thus each individual component $\delta(\trajectory, \policy)$ is small to simplify the analysis. Refer to the appendix for the full derivations of Equations~\eqref{eqn:policy-gradient-sampling} and~\eqref{eqn:bellman-sampling}.

Equations~\eqref{eqn:policy-gradient-sampling} and~\eqref{eqn:bellman-sampling} provide a sampling approach to compute the second-order derivatives, avoiding computing an expectation over the large trajectory space.
We can use the optimal policy derived in the forward pass and the predicted parameters $\parameter$ to run multiple simulations to collect a set of trajectories. These trajectories from predicted parameters can be used to compute each individual derivative in Equations~\eqref{eqn:policy-gradient-sampling} and~\eqref{eqn:bellman-sampling}.


\section{Resolving High-dimensional Derivatives by Low-rank Approximation}\label{sec:hessian}
Section~\ref{sec:sampling-approach} provides sampling approaches to compute an unbiased estimate of second-order derivatives.
However, since the dimensionality of the policy space $\dim(\policy)$ is often large, we cannot explicitly expand and invert $\nabla^2_{\policy} J_{\parameter}(\policy^*)$ to compute $\nabla^2_{\policy} J_{\parameter}(\policy^*)^{-1} \nabla^2_{\parameter\policy} J_{\parameter}(\policy^*)$, which is an inevitable step toward computing the full gradient of decision-focused learning in Equation~\eqref{eqn:full-gradient}.
In this section, we discuss various ways to approximate $\nabla^2_{\policy} J_{\parameter}(\policy^*)$ and how we use low-rank approximation and Woodbury matrix identity~\cite{woodbury1950inverting} to efficiently invert the sampled Hessian without expanding the matrices. Let $n \coloneqq \dim(\policy)$ and $k \ll n$ to be the number of trajectories we sample to compute the derivatives.

\subsection{Full Hessian Computation}\label{sec:full-hessian-computation}
In Equations~\eqref{eqn:policy-gradient-sampling} and~\eqref{eqn:bellman-sampling}, we can apply auto-differentiation tools to compute all individual derivatives in the expectation. However, this works only when the dimensionality of the policy space $\policy \in \Pi$ is small because the full expressions in Equations~\eqref{eqn:policy-gradient-sampling} and~\eqref{eqn:bellman-sampling} involve computing second-order derivatives , e.g., $\nabla^2_\policy \Phi_\parameter$ in Equation~\eqref{eqn:bellman-sampling}, which is still challenging to compute and store when the matrix size $n \times n$ is large. The computation cost is $O(n^2 k) + O(n^\omega)$ dominated by computing all the Hessian matrices and the matrix inversion with $2 \leq \omega \leq 2.373$ the complexity order of matrix inversion.

\subsection{Approximating Hessian by Constant Identity Matrix}\label{sec:identity-hessian}
One naive way to approximate the Hessian $\nabla^2_{\policy} J_{\parameter}(\policy^*)$ is to simply use a constant identity matrix $c I$. We choose $c < 0$ for the policy gradient--based optimality in Definition~\ref{def:policy-gradient-optimality-condition} because the optimization problem is a maximization problem and thus is locally concave at the optimal solution, whose Hessian is negative semi-definite. Similarly, we choose $c > 0$ for the Bellman--based optimality in Definition~\ref{def:bellman-optimality-condition}. This approach is fast, easily invertible. Moreover, in the Bellman version, Equation~\eqref{eqn:full-gradient} is equivalent to the idea of differentiating through the final gradient of Bellman error as proposed by Futoma et al.~\cite{futoma2020popcorn}\footnote{The gradient of Bellman error can be written as $\nabla_{\pi} J_{\theta}(\pi^*)$ where the policy $\pi$ is the parameters of the value function approximator and $J$ is defined as the expected Bellman error. The derivative of the final gradient can be written as $\nabla_{\weight} (\nabla_{\pi} J_{\theta}(\pi^*)) = \nabla^2_{\theta \pi} J_{\theta}(\pi^*) \frac{d \theta}{d w}$ by chain rule, which matches the last three terms in Equation~\ref{eqn:full-gradient} when the Hessian is approximated by an identity matrix.}. However, this approach ignores the information provided by the Hessian term, which can often lead to instability as we later show in the experiments. In this case, the computation complexity is dominated by computing $\nabla^2_{\parameter\policy} J_\parameter(\policy^*)$, which requires $O(nk)$.

\subsection{Low-rank Hessian Approximation and Application of Woodbury Matrix Identity}\label{sec:woodbury}
A compromise between the full Hessian and using a constant matrix is approximating the second-order derivative terms in Equations~\eqref{eqn:policy-gradient-sampling} and~\eqref{eqn:bellman-sampling} by constant identity matrices, while computing the first-order derivative terms with auto-differentiation.
Specifically, given a set of $k$ sampled trajectories $\{ \trajectory_1, \trajectory_2, \cdots, \trajectory_k \}$, Equations~\eqref{eqn:policy-gradient-sampling} and~\eqref{eqn:bellman-sampling} can be written and approximated in the following form:
\begin{align}\label{eqn:low-rank-approximation}
    \nabla^2_{\policy} J_\parameter(\policy) \approx \frac{1}{k} \sum\nolimits_{i=1}^k \left( u_i v_i^\top + H_i \right) \approx \frac{1}{k} \sum\nolimits_{i=1}^k \left( u_i v_i^\top + c I \right) = UV^\top + cI
\end{align}
where $U = [u_1, u_2, \cdots, u_k] / \sqrt{k} \in \R^{n \times k}, V = [v_1, v_2, \cdots, v_k] / \sqrt{k} \in \R^{n \times k}$ and $u_i, v_i \in \R^{n}$ correspond to the first-order derivatives in Equations~\eqref{eqn:policy-gradient-sampling} and~\eqref{eqn:bellman-sampling}, and $H_i$ corresponds to the remaining terms that involve second-order derivatives.
We use a constant identity matrix to approximate $H_i$, while explicitly computing the remaining parts to increase accuracy.

However, we still cannot explicitly expand $U V^\top \in \R^{n \times n}$ since the dimensionality is too large.
Therefore, we apply Woodbury matrix identity~\cite{woodbury1950inverting} to invert Equation~\eqref{eqn:low-rank-approximation}:
\begin{align}
    (\nabla^2_{\policy} J_\parameter(\policy))^{-1} \approx (U V^\top + cI)^{-1} =  \frac{1}{c} I - \frac{1}{c} U(cI - V^\top U)^{-1} V^\top
\end{align}
where $V^\top U \in \R^{k \times k}$ can be efficiently computed with much smaller $k \ll n$. When we compute the full gradient for decision-focused learning in Equation~\eqref{eqn:full-gradient}, we can then apply matrix-vector multiplication without expanding the full high-dimensional matrix, which results in a computation cost of $O(nk + k^\omega)$ that is much smaller than the full computation cost $O(n^2k +n^\omega)$.

The full algorithm for decision-focused learning in MDPs is illustrated in Algorithm~\ref{alg:decision-focused-learning}.

\begin{algorithm}[t]
\caption{Decision-focused Learning for MDP Problems with Missing Parameters}
\label{alg:decision-focused-learning}
\textbf{Parameter:} Training set $\dataset_{\text{train}} = \{ (\feature_i, \trajectories_i) \}_{i \in I_\text{train}}$, learning rate $\alpha$, regularization $\lambda = 0.1$ \\
\textbf{Initialization:} Initialize predictive model $\model_\weight: \Feature \rightarrow \Parameter$ parameterized by $\weight$ \\
\For{epoch $\in \{1, 2, \dots \}$, each training instance $(\feature, \trajectories) \in \dataset_{\text{train}}$} {
        \textbf{Forward:} Make prediction $\theta = \model_\weight(\feature)$. Compute two-stage loss $\mathcal{L}(\parameter, \trajectories)$. Run model-free RL to get an optimal policy $\policy^*(\parameter)$ on MDP problem using parameter $\parameter$. \\
        \textbf{Backward:} Sample a set of trajectories under $\parameter, \policy^*$ to compute $\nabla^2_{\policy} J_\parameter(\policy^*), \nabla^2_{\theta \policy} J_\parameter(\policy^*)$ \\
        \textbf{Gradient step:} Set $\Delta \weight = \frac{d ~ \text{Eval}_\trajectories(\policy^*)}{d \weight} - \lambda \frac{d \mathcal{L}(\parameter, \trajectories)}{d \weight}$ by Equation~\eqref{eqn:full-gradient} with predictive loss $\mathcal{L}$ as regularization. Run gradient ascent to update model: $\weight \gets \weight + \alpha \Delta \weight$ \\
}
\textbf{Return:} Predictive model $\model_\weight$.
\end{algorithm}


\section{Example MDP Problems with Missing Parameters}\label{sec:experiment_setting}

\paragraph{Gridworld with unknown cliffs}
We consider a Gridworld environment with a set of training and test MDPs. Each MDP is a $5 \! \times \! 5$ grid with a start state located at the bottom left corner and a safe state with reward drawn from $\mathcal{N}(5, 1)$ located at the top right corner. Each intermediate state has a reward associated with it, where most of them give the agent a reward drawn from $\mathcal{N}(0, 1)$ but $20\%$ of the them are cliffs and give $\mathcal{N}(-10, 1)$ penalty to the agent.
The agent has no prior information about the reward of each grid cell (i.e., the reward functions of the MDPs are unknown), but has a feature vector per grid cell correlated to the reward, and a set of historical trajectories from the training MDPs.
The agent learns a predictive model to map from the features of a grid cell to its missing reward information, and the resulting MDP is used to plan.
Since the state and action spaces are both finite, we use tabular value-iteration~\cite{sutton1990integrated} to solve the MDPs. 

\paragraph{Partially observable snare-finding problems with missing transition function}
We consider a set of {\it synthetic} conservation parks, each with $20$ sites, that are vulnerable to poaching activities.
Each site in a conservation park starts from a \textit{safe} state and has an unknown associated probability that a poacher places a snare at each time step. 
Motivated by~\cite{xu2020stay}, we assume a ranger who can visit one site per time step and observes whether a snare is present. If a snare is present, the ranger removes it and receives reward $1$. Otherwise, the ranger receives reward of $-1$.
The snare can stay in the site if the ranger does not remove it, which makes the snare-finding problem a sequential problem rather than a multi-armed bandit problem.
As the ranger receives no information about the sites that they do not visit, the MDP belief state is the ranger's belief about whether a snare is present.
The ranger uses the features of each site and historical trajectories to learn a predictive model of the missing transition probability of a snare being placed. 
Since the belief state is continuous and the action space is discrete, given a predictive model of the missing transition probability, the agent uses double deep Q-learning (DDQN)~\cite{van2016deep} to solve the predicted MDPs.

\paragraph{Partially observable patient treatment problems with missing transition probability}
We consider a version of the Tuberculosis Adherence Problem explored in~\cite{mate2020collapsing}.
Given that the treatment for tuberculosis requires patients to take medications for an extended period of time, one way to improve patient adherence is Directly Observed Therapy, in which a healthcare worker routinely checks in on the patient to ensure that they are taking their medications.
In our problem, we consider 5 {\it synthetic} patients who have to take medication for 30 days. Each day, a healthcare worker chooses one patient to intervene on. They observe whether that patient is adhering or not, and improve the patient's likelihood of adhering on that day, 
where we use the number of adherent patients as the reward to the healthcare worker.
Whether a patient actually adheres or not is determined by a transition matrix that is randomly drawn from a fixed distribution inspired by~\cite{killian2019learning}.
The aim of the prediction stage is to use the features associated with each patient, e.g., patient characteristics, to predict the missing transition matrices.
The aim of the RL stage is then to create an intervention strategy for the healthcare worker such that the sum of patient adherence over the 30-day period is maximised. Due to partial observability, we convert the problem to its continuous belief state equivalence and solve it using DDQN. 

Please refer to Appendix~\ref{sec:experiment-setup} for more details about problem setup in all three domains.
\definecolor{lightred}{HTML}{FA8C84}
\definecolor{lightgreen}{HTML}{A0E384}

\begin{table}[t]
\caption{OPE performances of different methods on the test MDPs averaged over $30$ independent runs.
Decision-focused learning methods consistently outperform two-stage approach, with some exception using identity matrix based Hessian approximation which may lead to high gradient variance.
}\label{table:ope-trajectories}
\vspace{2mm}
\small
\centering
\begin{tabular}{lcccccc}
\toprule
& \multicolumn{2}{c}{Gridworld} & \multicolumn{2}{c}{Snare} & \multicolumn{2}{c}{Tuberculosis} \\ \cmidrule(lr{.75em}){2-3} \cmidrule(lr{.75em}){4-5} \cmidrule(lr{.75em}){6-7}
Trajectories        & Random                  & Near-optimal          & Random                 & Near-optimal          & Random                   & Near-optimal  \\ \midrule
TS                  & $ \cellcolor{lightred} -12.0 \! \pm \! 1.3 $ & $ \cellcolor{lightred} 4.2 \! \pm \! 0.8 $ & $ 0.8 \! \pm \! 0.3 $  & $ 3.7 \! \pm \! 0.3 $ & $ \cellcolor{lightred} 35.8 \! \pm \! 1.5 $ & $\cellcolor{lightred} 38.7 \! \pm \! 1.6$ \\
PG-Id               & $ -11.7 \! \pm \! 1.2 $ & $ \cellcolor{lightgreen} 5.7 \! \pm \! 0.8 $ & $ \cellcolor{lightred} -0.1 \! \pm \! 0.3 $ & $ \cellcolor{lightred} 3.6 \! \pm \! 0.3 $ & $38.4 \! \pm \! 1.5$ & $40.7 \! \pm \! 1.7$ \\
Bellman-Id          & $ \cellcolor{lightgreen} -9.6 \! \pm \! 1.4 $  & $ 4.6 \! \pm \! 0.7 $ & $ 0.7 \! \pm \! 0.4 $  & $ \cellcolor{lightred} 3.6 \! \pm \! 0.3 $ & $\cellcolor{lightgreen} 39.1 \! \pm \! 1.7$ & $40.8 \! \pm \! 1.7$\\
PG-W                & $ -11.2 \! \pm \! 1.2 $ & $ 5.5 \! \pm \! 0.8 $ & $ 1.2 \! \pm \! 0.4 $  & $ \cellcolor{lightgreen} 4.8 \! \pm \! 0.3 $ & $38.4 \! \pm \! 1.5$ & $40.8 \! \pm \! 1.7$ \\
Bellman-W           & $ -11.3 \! \pm \! 1.4 $ & $ 4.8 \! \pm \! 0.8 $ & $ \cellcolor{lightgreen} 1.5 \! \pm \! 0.4 $  & $ 4.3 \! \pm \! 0.3 $ & $38.6 \! \pm \! 1.6$ & $\cellcolor{lightgreen} 41.1 \! \pm \! 1.7$ \\ \bottomrule
\end{tabular}
\end{table}

\section{Experimental Results and Discussion}
In our experiments, we compare two-stage learning ({\bf TS}) with different versions of decision-focused learning ({\bf DF}) using two different optimality conditions, policy gradient ({\bf PG}) and Bellman equation-based ({\bf Bellman}), and two different Hessian approximations ({\bf Identity}, {\bf Woodbury}) defined in Section~\ref{sec:hessian}. Computing the {\bf full} Hessian (as in Section~\ref{sec:full-hessian-computation}) is computationally intractable.
Across all three examples, we use $7$ training MDPs, $1$ validation MDP, and $2$ test MDPs, each with $100$ trajectories.
The predictive model is trained on the training MDP trajectories for $100$ epochs.
Performance is evaluated under the Off-Policy Evaluation (OPE) metric of Equation~\eqref{eqn:ope} with respect to the withheld test trajectories. In the following, we will discuss \textit{how} DF variants work compared with TS methods, and explore \textit{why} some methods are better.
We use two different trajectories, {\bf random} and {\bf near-optimal}, in the training MDP to model different imbalanced information given to train the predictive model. The results are shown in Table~\ref{table:ope-trajectories}.

\paragraph{Decision-focused learning with the Woodbury matrix identity outperforms two-stage learning} Table~\ref{table:ope-trajectories} summarizes the average OPE performance on the test MDPs. We can see that in all of the three problem settings, the best performances are all achieved by decision-focused learning. However, when Hessian approximation is not sufficiently accurate, decision-focused learning can sometimes perform even worse than two-stage (e.g., PG-Id and Bellman-Id in the snare problem). In contrast, decision-focused methods using a more accurate low-rank approximation and Woodbury matrix identity (i.e., PG-W and Bellman-W), as discussed in Section~\ref{sec:woodbury}, dominate two-stage performance in the test MDPs across all settings.

\paragraph{Low predictive loss does not imply a winning policy}
In Figures~\ref{fig:gridworld-training-loss},~\ref{fig:snare-training-loss}, we plot the predictive loss curve in the training MDPs over different training epochs of Gridworld and snare problems.
In particular, two-stage approach is trained to minimize such loss, but fails to win in Table~\ref{table:ope-trajectories}. Indeed, low predictive loss on the training MDPs does not always imply a high off-policy evaluation on the training MDPs in Figure~\ref{fig:gridworld-training-eval} due to the misalignment of predictive accuracy and decision quality, which is consistent with other studies in mismatch of predictive loss and evaluation metric~\cite{huang2019addressing,lambert2020objective,johnson2019survey,branco2016survey}.

\paragraph{Comparison between different Hessian approximations}
In Table~\ref{table:ope-trajectories}, we notice that more inaccurate Hessian approximation (identity) does not always lead to poorer performance.
We hypothesize that this is due to the non-convex off-policy evaluation objective that we are optimizing, where higher variance might sometimes help escape local optimum more easily.
The identity approximation is more unstable across different tasks and different trajectories given. In Table~\ref{table:ope-trajectories}, the performance of Bellman-Identity and PG-Identity sometimes lead to wins over two stage and sometimes losses.

\begin{figure*}[t]
\centering
\begin{minipage}{0.66\textwidth}
    \centering
    \begin{subfigure}{\linewidth}
        \includegraphics[width=\textwidth]{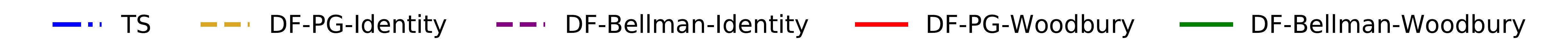}
    \end{subfigure}
    \begin{subfigure}{0.49\linewidth}
        \centering
        \includegraphics[width=\textwidth,height=2.45cm]{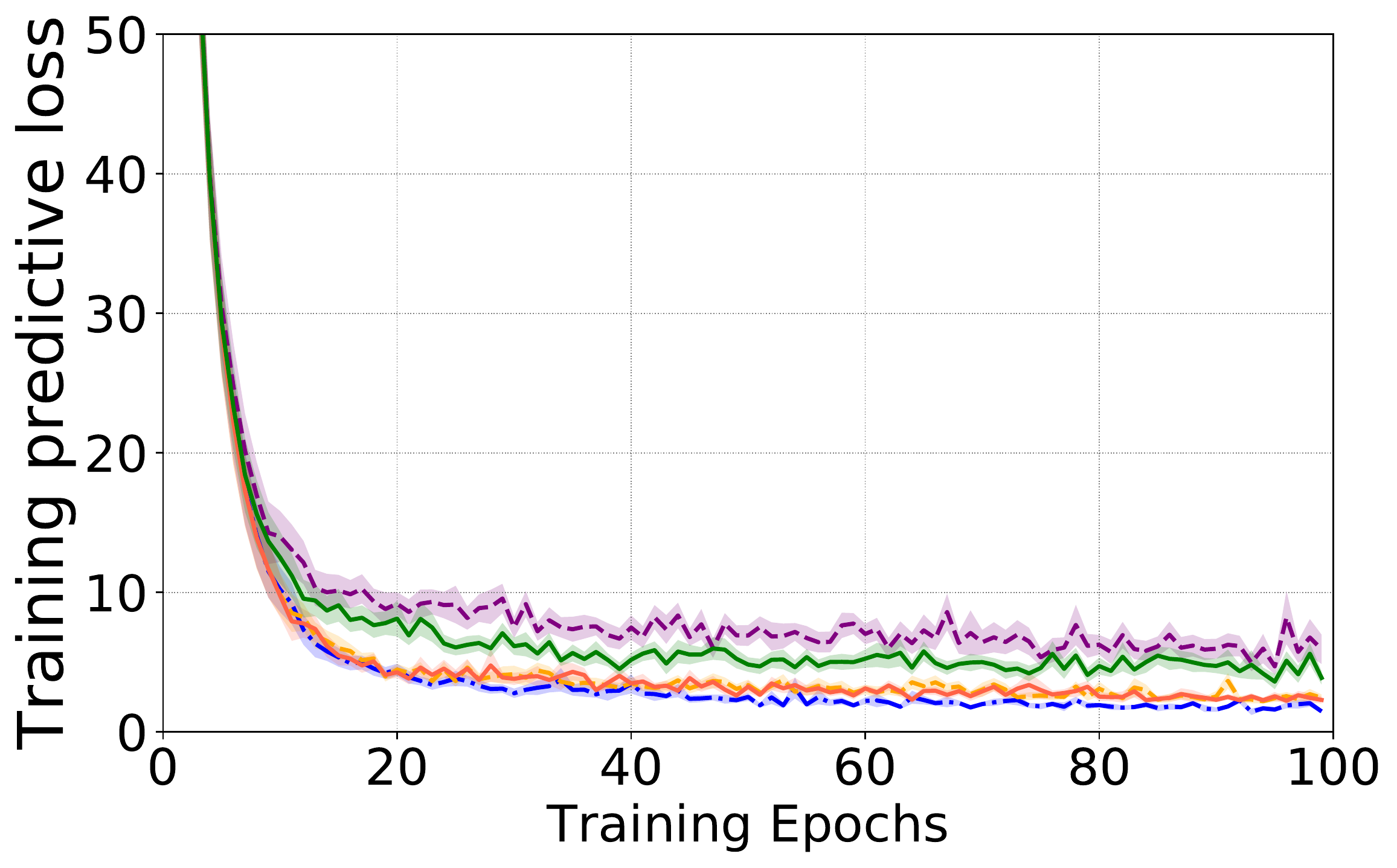}
        \caption{Predictive loss in training MDPs}
        \label{fig:gridworld-training-loss}
    \end{subfigure}
    \hfill
    \begin{subfigure}{0.49\linewidth}
        \centering
        \includegraphics[width=\textwidth,height=2.45cm]{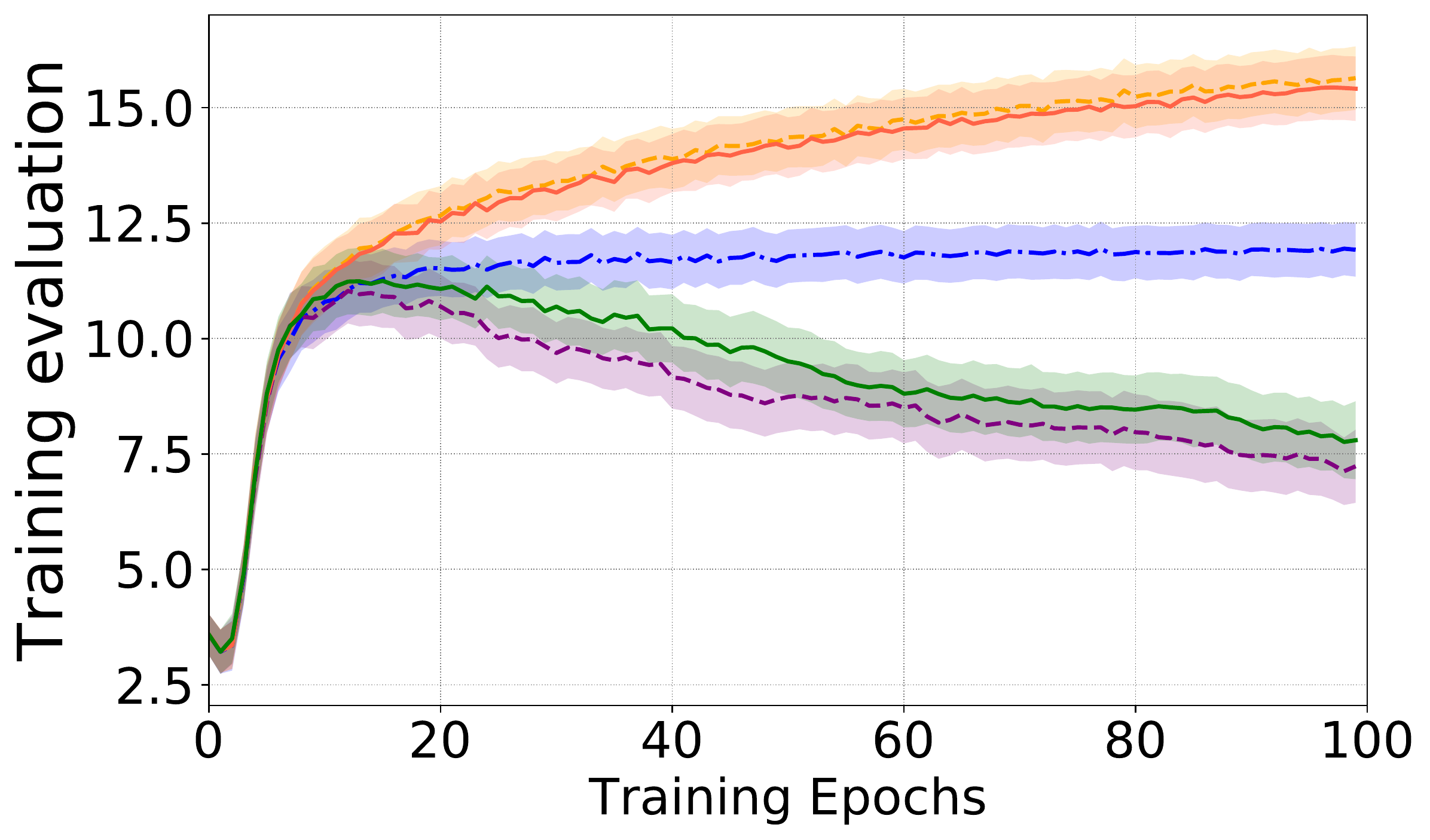}
        \caption{Performance in training MDPs}
        \label{fig:gridworld-training-eval}
    \end{subfigure}
\end{minipage}
\hfill
\begin{minipage}{0.33\textwidth}
    \includegraphics[width=\linewidth,height=2.45cm]{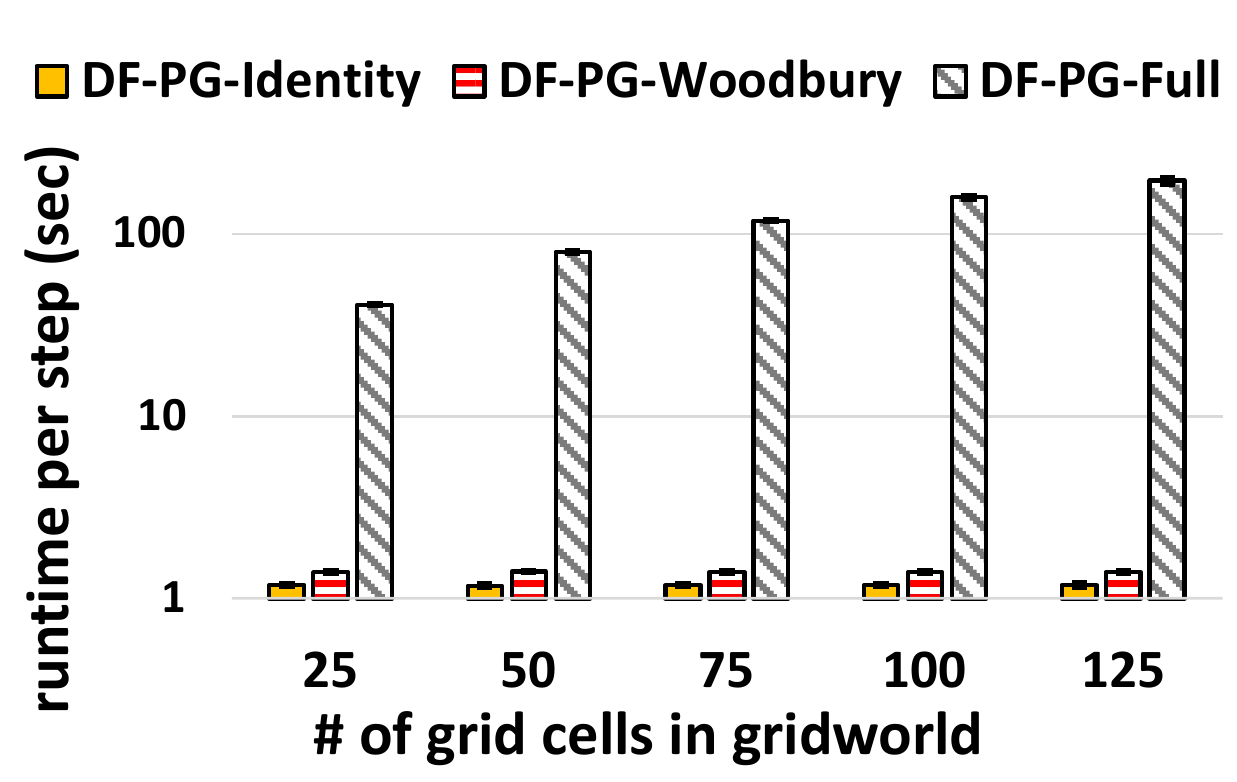}
    \subcaption{Backpropagation runtime per gradient step per instance}
    \label{fig:gridworld-runtime}
\end{minipage}
    \caption{Learning curves of Gridworld problem with near-optimal trajectories. Two-stage minimizes the predictive loss in Figure~\ref{fig:gridworld-training-loss}, but this does not lead to good training performance in Figure~\ref{fig:gridworld-training-eval}.  Figure~\ref{fig:snare-runtime} shows the backpropagation runtime per gradient step per instance of three Hessian approximations, which becomes intractable when trained for multiple instances and multiple epochs.}
    \label{fig:gridworld}
\end{figure*}
\begin{figure*}[t]
\centering
\begin{minipage}{0.66\textwidth}
    \centering
    \begin{subfigure}{\linewidth}
        \includegraphics[width=\textwidth]{figures/legend_short.pdf}
    \end{subfigure}
    \begin{subfigure}{0.49\linewidth}
        \centering
        \includegraphics[width=\textwidth,height=2.45cm]{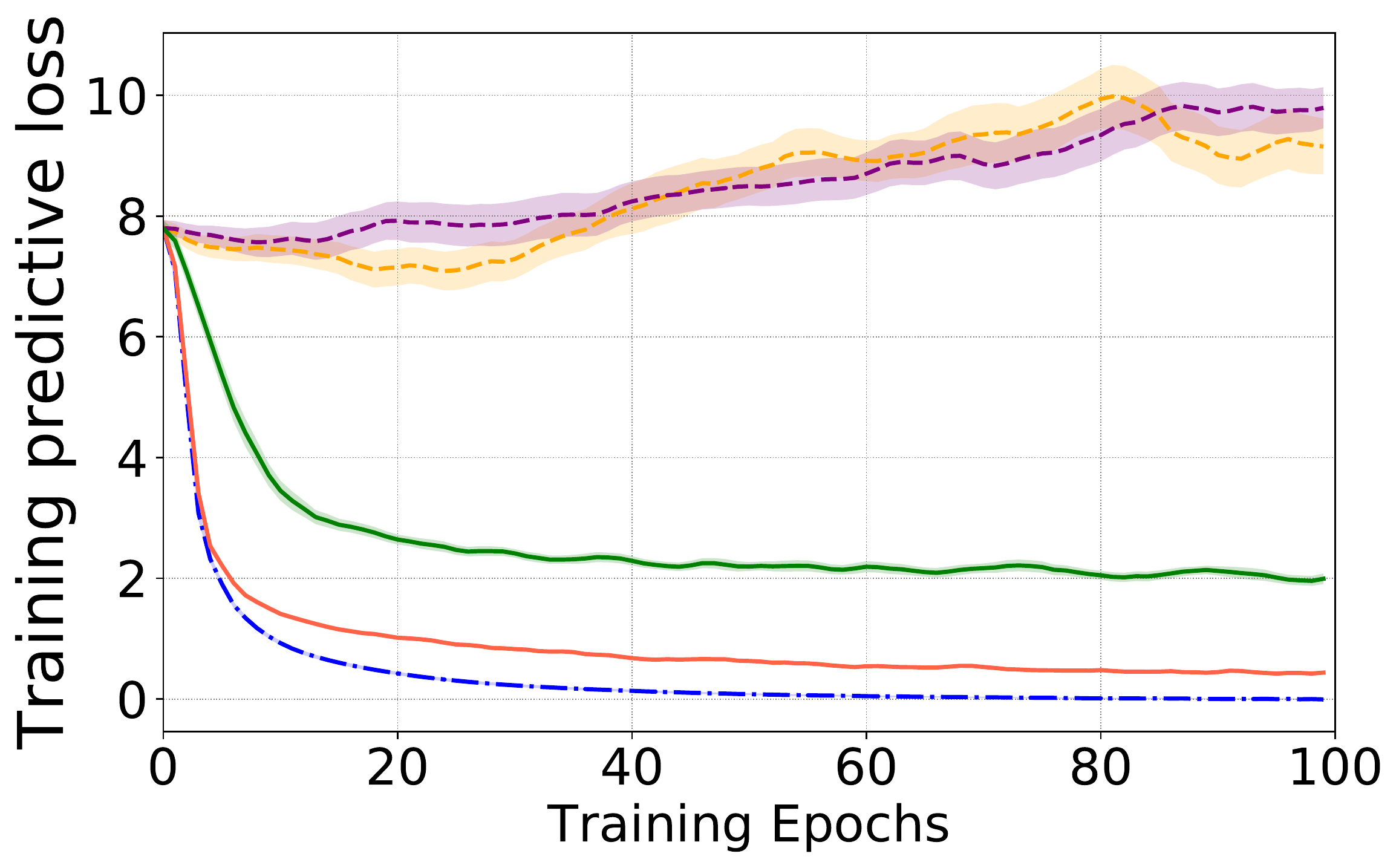}
        \caption{Predictive loss in training MDPs}
        \label{fig:snare-training-loss}
    \end{subfigure}
    \hfill
    \begin{subfigure}{0.49\linewidth}
        \centering
        \includegraphics[width=\textwidth,height=2.45cm]{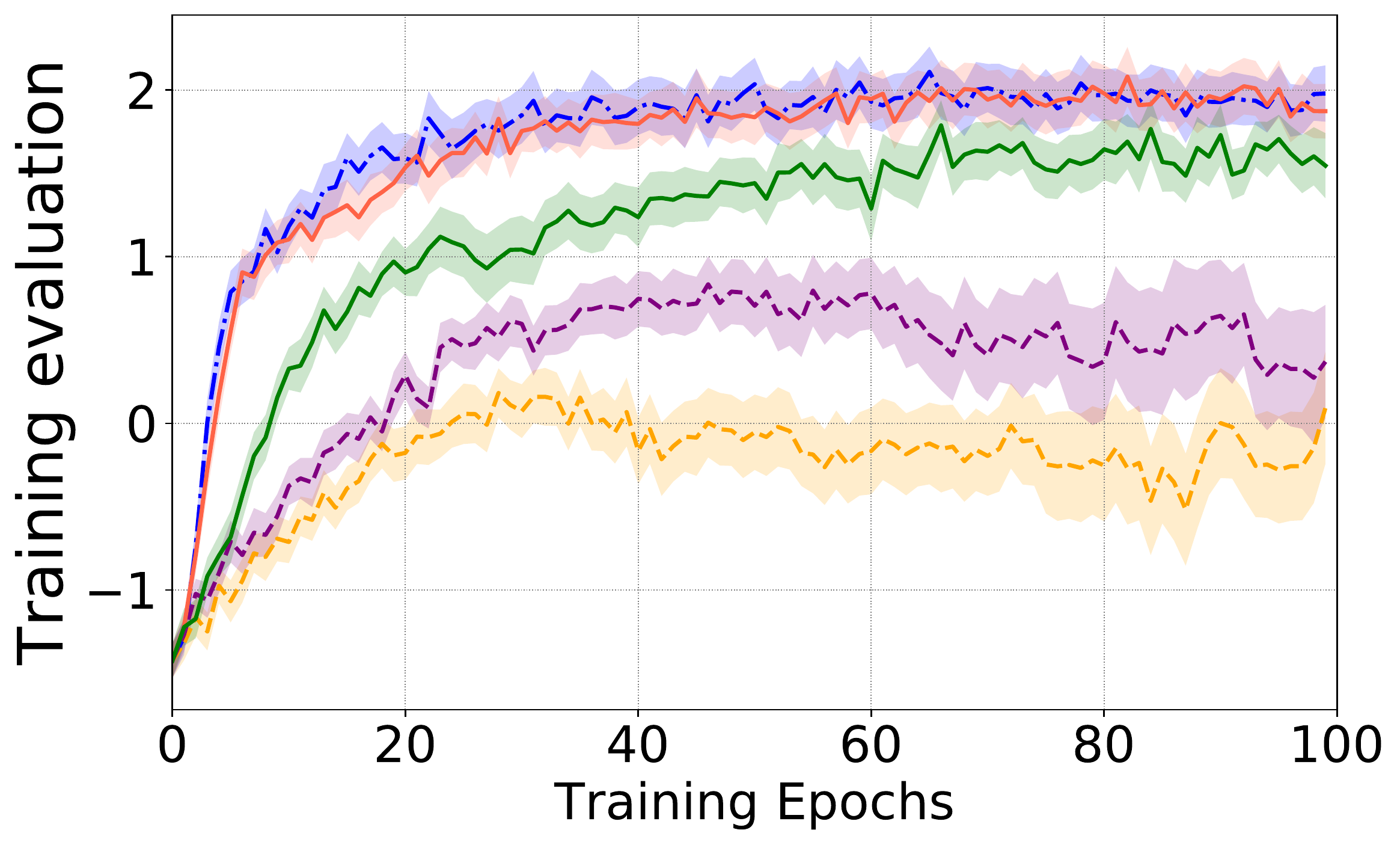}
        \caption{Performance in training MDPs}
        \label{fig:snare-training-eval}
    \end{subfigure}
\end{minipage}
\hfill
\begin{minipage}{0.33\textwidth}
    \includegraphics[width=\linewidth,height=2.45cm]{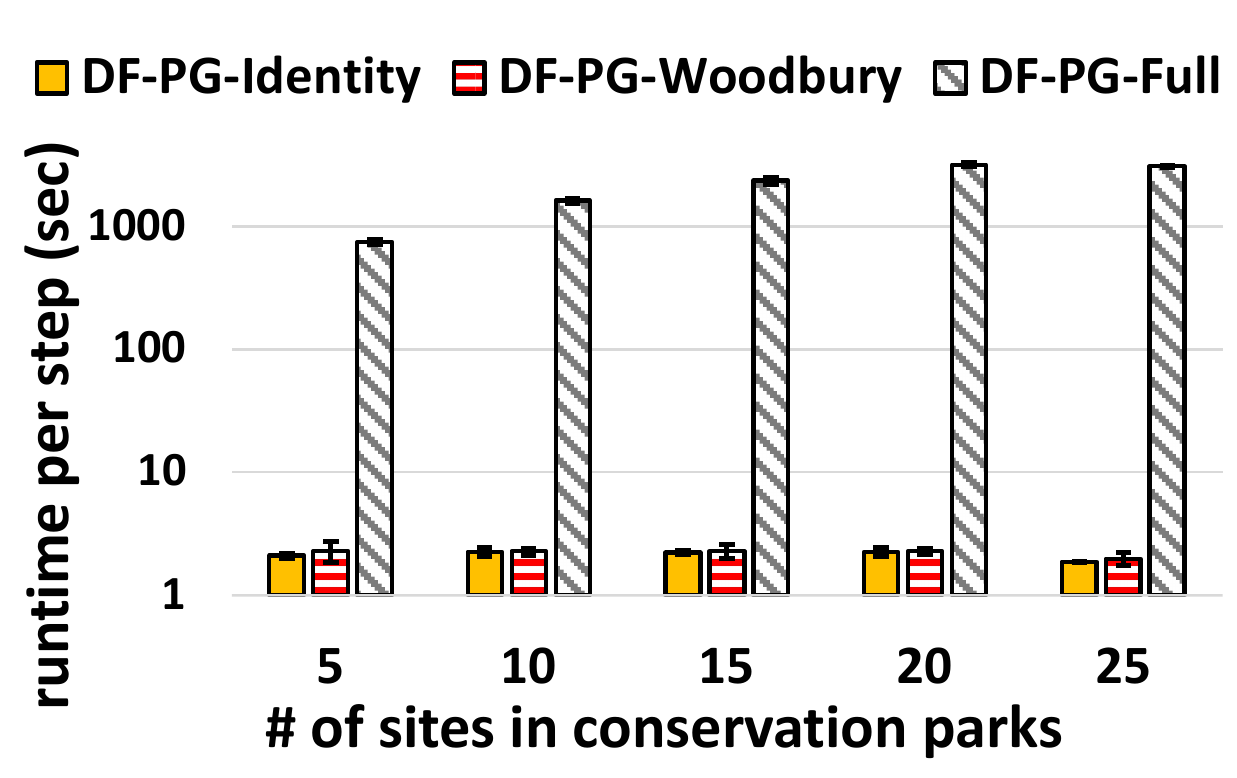}
    \subcaption{Backpropagation runtime per gradient step per instance}
    \label{fig:snare-runtime}
\end{minipage}
    \caption{Learning curves of snare finding problems with random trajectories. Two-stage achieves both low predictive loss in Figure~\ref{fig:snare-training-loss} and high training OPE in Figure~\ref{fig:snare-training-eval}, but the test performance is poor in Table~\ref{table:ope-trajectories}.
    Figure~\ref{fig:snare-runtime} plots the backpropagation runtime per gradient step per instance.}
    \label{fig:snare}
\end{figure*}

\paragraph{Comparison between policy gradient and Bellman-based decision-focused learning}
We observe that the Bellman-based decision-focused approach consistently outperforms the policy gradient-based approach when the trajectories are random, while the policy gradient-based decision-focused approach mostly achieves better performance when near-optimal trajectories are present. We hypothesize that this is due to the different objectives considered by different optimality conditions. The Bellman error aims to accurately cover \emph{all} the value functions, which works better on random trajectories; the policy gradient aims to maximize the expected cumulative reward along the \emph{optimal policy only}, which works better with near-optimal trajectories that have better coverage in the optimal regions.

\paragraph{Computation cost}
Lastly, Figures~\ref{fig:gridworld-runtime} and~\ref{fig:snare-runtime} show the backpropagation runtime of the policy-gradient based optimality condition per gradient step per training instance across different Hessian approximations and different problem sizes in the gridworld and snare finding problems. To train the model, we run $100$ epochs for every MDP in the training set, which immediately makes the full Hessian computation intractable as it would take more than a day to complete.

Analytically, let $n$ be the dimensionality of the policy space and $k \ll n$ be the number of sampled trajectories used to approximate the derivatives. As shown in Section~\ref{sec:hessian}, the computation cost of full Hessian $O(n^2 + n^\omega)$ is quadratic in $n$ and strictly dominates all the others.
In contrast, the costs of the identity matrix approximation $O(nk)$ and the Woodbury approximation $O(nk + k^\omega)$ are both linear in $n$. The Woodbury method offers an option to get a more accurate Hessian at low additional cost.


\section{Conclusion}
This paper considers learning a predictive model to address the missing parameters in sequential decision problems.
We successfully extend decision-focused learning from optimization problems to MDP problems solved by deep reinforcement learning algorithms, where we apply sampling and low-rank approximation to Hessian matrix computation to address the associated computational challenges.
All our results suggest that decision-focused learning can outperform two-stage approach by directly optimizing the final evaluation metric.
The idea of considering sequential decision problems as differentiable layers also suggests a different way to solve online reinforcement learning problems, which we reserve as a future direction.

\section*{Acknowledgement}
This research was supported by MURI Grant Number W911NF-17-1-0370 and W911NF-18-1-0208. Chen and Perrault acknowledge support from the Center for Research on Computation and Society.
Doshi-Velez acknowledges support from NSF CAREER IIS-1750358.
The computations in this paper were run on the FASRC Cannon cluster supported by the FAS Division of Science Research Computing Group at Harvard University.

\section*{Declaration of Interests}
Doshi-Velez reports grants from the National Institutes of Health, and personal fees from Davita Kidney Care and Google Health via Adecco, outside of the submitted work.
Tambe is jointly affiliated with Google Research India, outside of the submitted work. Wang, Shah, Chen, and Perrault declare no competing interests.

\bibliographystyle{abbrv}
\bibliography{reference}

\newpage
\appendix
\section*{Appendix}

\section{Proofs}\label{sec:missing-proofs}

\policyGradientUnbiased*
\begin{proof}[First part of the proof (policy gradient theorem)]

The first part of the proof follows the policy gradient theorem. We begin with definitions.

Let $\trajectory = \{s_1, a_1, s_2, a_2, \cdots, s_h, a_h \} $ be a trajectory sampled according to policy $\policy$ and MDP parameter $\parameter$. Define $\trajectory_j = \{s_1, a_1, \cdots, s_{j}, a_{j}\}$ to be a partial trajectory up to time step $j$ for any $j \in [h]$.
Define $G_\parameter(\trajectory) = \sum\nolimits_{j=1}^{h} \gamma^j R_\parameter(s_j, a_j)$ to be the discounted value of trajectory $\trajectory$.
Let $p_\parameter(\trajectory, \policy)$ be the probability of seeing trajectory $\trajectory$ under parameter $\parameter$ and policy $\policy$.
Given MDP parameter $\parameter$, we can compute the expected cumulative reward of policy $\policy$ by:
\begin{align}
    J_\parameter(\policy) &= \mathop{\mathbf{E}}\nolimits_{\trajectory \sim \policy, \parameter} G_\parameter(\trajectory) \nonumber \\
    &= \mathop{\mathbf{E}}\nolimits_{\trajectory \sim \policy, \parameter} \sum\nolimits_{j=1}^h \gamma^j R_\parameter(s_j, a_j) \nonumber \\
    &= \sum\nolimits_{j=1}^h \mathop{\mathbf{E}}\nolimits_{\trajectory \sim p_\parameter(\trajectory,\policy)} \gamma^i R_\parameter(s_j, a_j) \label{eqn:policy-gradient-full-expectation} \\
    &= \sum\nolimits_{j=1}^h \mathop{\mathbf{E}}\nolimits_{\trajectory_j \sim p_\parameter(\trajectory_j,\policy)} \gamma^j R_\parameter(s_j, a_j) \label{eqn:policy-gradient-partial-expectation} \\
    &= \sum\nolimits_{j=1}^h \int\nolimits_{\trajectory_j} \gamma^j R_\parameter(s_j, a_j) p_\parameter(\trajectory_j, \policy) d \trajectory_j \nonumber
\end{align}
Equation~\ref{eqn:policy-gradient-full-expectation} to Equation~\ref{eqn:policy-gradient-partial-expectation} uses the fact that we only need to sample up to time step $j$ in order to compute $\gamma^j R_\parameter(s_j, a_j)$. Everything beyond time step $j$ does not affect the expectation up to time step $j$.
We can compute the policy gradient by:
\begin{align}
    &\nabla_\policy J_\parameter(\policy) \nonumber \\ &= \nabla_\policy \sum\nolimits_{j=1}^h \int\nolimits_{\trajectory_j} \gamma^j R_\parameter(s_j, a_j) p_\parameter(\trajectory_j, \policy) d \trajectory_j \nonumber \\
    &= \sum\nolimits_{j=1}^h \int\nolimits_{\trajectory_j} \gamma^j R_\parameter(s_j, a_j) \nabla_\policy p_\parameter(\trajectory_j, \policy) d \trajectory_j \label{eqn:policy-gradient-passing-gradient} \\
    &= \sum\nolimits_{j=1}^h \int\nolimits_{\trajectory_j} \gamma^j R_\parameter(s_j, a_j) p_\parameter(\trajectory_j, \policy) \nabla_\policy \log p_\parameter(\trajectory_j, \policy) d \trajectory_j \label{eqn:policy-gradient-log-trick}
\end{align}
where Equation~\ref{eqn:policy-gradient-passing-gradient} is because only the probability term is dependent on policy $\policy$, and  Equation~\ref{eqn:policy-gradient-log-trick} is by $\nabla_\policy p_\parameter = p_\parameter \nabla_\policy \log p_\parameter$.

We can now merge the integral back to an expectation over trajectory $\trajectory_j$ by merging the probability term $p_\parameter$ and the integral:
\begin{align}
    \nabla_\policy J_\parameter(\policy)
    &= \sum\nolimits_{j=1}^h \mathop{\mathbf{E}}\nolimits_{\trajectory_j \sim p_\parameter(\trajectory_j, \policy)} \left[ \gamma^j R_\parameter(s_j, a_j) \nabla_\policy \log p_\parameter(\trajectory_j, \policy) \right] \nonumber \\
    &= \sum\nolimits_{j=1}^h \mathop{\mathbf{E}}\nolimits_{\trajectory \sim p_\parameter(\trajectory, \policy)} \left[ \gamma^j R_\parameter(s_j, a_j) \nabla_\policy \log p_\parameter(\trajectory_j, \policy) \right] \nonumber \\
    &= \mathop{\mathbf{E}}\nolimits_{\trajectory \sim p_\parameter(\trajectory, \policy)} \left[ \sum\nolimits_{j=1}^h \gamma^j R_\parameter(s_j, a_j) \nabla_\policy \log p_\parameter(\trajectory_j, \policy) \right] \nonumber \\
    &= \mathop{\mathbf{E}}\nolimits_{\trajectory \sim p_\parameter(\trajectory, \policy)} \left[ \sum\nolimits_{j=1}^h \gamma^j R_\parameter(s_j, a_j) \nabla_\policy \left( \sum\nolimits_{i=1}^j \log \policy(a_i \mid s_i) + \sum\nolimits_{i=1}^j \log p_\parameter(s_i, a_i, s_{i+1}) \right) \right] \label{eqn:policy-gradient-probability-expansion} \\
    &=  \mathop{\mathbf{E}}\nolimits_{\trajectory \sim p_\parameter(\trajectory, \policy)} \left[ \sum\nolimits_{j=1}^h \gamma^j R_\parameter(s_j, a_j) \sum\nolimits_{i=1}^j \nabla_\policy  \log \policy(a_i \mid s_i) \right] \nonumber \\
    &= \mathop{\mathbf{E}}\nolimits_{\trajectory \sim p_\parameter(\trajectory, \policy)} \left[ \sum\nolimits_{j=1}^h \sum\nolimits_{i=1}^j \gamma^j R_\parameter(s_j, a_j) \nabla_\policy  \log \policy(a_i \mid s_i) \right] \nonumber \\
    &= \mathop{\mathbf{E}}\nolimits_{\trajectory \sim p_\parameter(\trajectory, \policy)} \left[ \sum\nolimits_{i=1}^h \sum\nolimits_{j=i}^h \gamma^j R_\parameter(s_j, a_j) \nabla_\policy  \log \policy(a_i \mid s_i) \right] \nonumber \\
    &=  \mathop{\mathbf{E}}\nolimits_{\trajectory \sim p_\parameter(\trajectory, \policy)} \left[ \nabla_\policy \Phi_\parameter(\trajectory, \policy) \right] \label{eqn:policy-gradient-final-phi}
\end{align}
where Equation~\ref{eqn:policy-gradient-probability-expansion} is by expanding the probability of seeing trajectory $\trajectory_j$ when parameter $\parameter$ and policy $\policy$ are used, where the probability decomposes into the first term action probability $\policy(a_i \mid s_i)$, and the second term transition probability $p_\parameter(s_i, a_i, s_{i+1})$, which is independent of policy $\policy$ and thus disappears.
The last equation in Equation~\ref{eqn:policy-gradient-final-phi} connects back to the definition of $\Phi$ as defined in the statement of Theorem~\ref{thm:policy-gradient-unbiased}.
$\Phi$ is easy to compute and easy to differentiate through. We can therefore sample a set of trajectories $\{\trajectory\}$ to compute the corresponding $\Phi$ and its derivative to get the unbiased policy gradient estimate.
\end{proof}

\begin{proof}[Second part of the proof (second-order derivatives)]
Given the policy gradient theorem as we recall in the above derivation, we have:
\begin{align}
    \nabla_\policy J_\parameter(\policy) = \mathop{\mathbf{E}}\nolimits_{\trajectory \sim p_\parameter(\trajectory, \policy)} \left[ \nabla_\policy \Phi_\parameter(\trajectory, \policy) \right] \label{eqn:policy-gradient-first-order-form}
\end{align}
We can compute the derivative of Equation~\ref{eqn:policy-gradient-first-order-form} by:
\begin{align}
    \nabla^2_\policy J_\parameter(\policy) &= \nabla_\policy \nabla_\policy J_\parameter(\policy) \nonumber \\
    &= \nabla_\policy \mathop{\mathbf{E}}\nolimits_{\trajectory \sim p_\parameter(\trajectory, \policy)} \left[ \nabla_\policy \Phi_\parameter(\trajectory, \policy) \right] \nonumber \\
    &= \nabla_\policy \int_{\trajectory} \nabla_\policy \Phi_\parameter(\trajectory, \policy) p_\parameter(\trajectory, \policy) d \trajectory \nonumber \\
    &= \int_{\trajectory} \left[ \nabla_\policy \Phi_\parameter(\trajectory, \policy) \nabla_\policy p_\parameter(\trajectory, \policy)^\top + \nabla^2_\policy \Phi_\parameter(\trajectory, \policy) p_\parameter(\trajectory, \policy) \right] d \trajectory  \label{eqn:policy-gradient-chain-rule} \\
    &= \int_{\trajectory} \left[ \nabla_\policy \Phi_\parameter(\trajectory, \policy) \nabla_\policy \log p_\parameter(\trajectory, \policy)^\top + \nabla^2_\policy \Phi_\parameter(\trajectory, \policy) \right] p_\parameter(\trajectory, \policy) d \trajectory \nonumber \\
    &= \mathop{\mathbf{E}}\nolimits_{\trajectory \sim p_\parameter(\trajectory, \policy)} \left[ \nabla_\policy \Phi_\parameter(\trajectory, \policy) \nabla_\policy \log p_\parameter(\trajectory, \policy)^\top + \nabla^2_\policy \Phi_\parameter(\trajectory, \policy) \right] \label{eqn:policy-gradient-policy-hessian}
\end{align}
where Equation~\ref{eqn:policy-gradient-chain-rule} passes gradient inside the integral and applies chain rule. Equation~\ref{eqn:policy-gradient-policy-hessian} provides an unbiased estimate of the second-order derivative $\nabla^2_\policy J_\parameter(\policy)$.

Similarly, we can compute:
\begin{align}
    \nabla^2_{\parameter \policy} J_\parameter(\policy) &= \nabla_\parameter \nabla_\policy J_\parameter(\policy) \nonumber \\
    &= \nabla_\parameter \mathop{\mathbf{E}}\nolimits_{\trajectory \sim p_\parameter(\trajectory, \policy)} \left[ \nabla_\policy \Phi_\parameter(\trajectory, \policy) \right] \nonumber \\
    &= \nabla_\parameter \int_{\trajectory} \nabla_\policy \Phi_\parameter(\trajectory, \policy) p_\parameter(\trajectory, \policy) d \trajectory \nonumber \\
    &= \int_{\trajectory} \left[ \nabla_\policy \Phi_\parameter(\trajectory, \policy) \nabla_\parameter p_\parameter(\trajectory, \policy)^\top + \nabla^2_{\parameter \policy} \Phi_\parameter(\trajectory, \policy) p_\parameter(\trajectory, \policy) \right] d \trajectory \nonumber \\
    &= \int_{\trajectory} \left[ \nabla_\policy \Phi_\parameter(\trajectory, \policy) \nabla_\parameter \log p_\parameter(\trajectory, \policy)^\top + \nabla^2_{\parameter \policy} \Phi_\parameter(\trajectory, \policy) \right] p_\parameter(\trajectory, \policy) d \trajectory \nonumber \\
    &= \mathop{\mathbf{E}}\nolimits_{\trajectory \sim p_\parameter(\trajectory, \policy)} \left[ \nabla_\policy \Phi_\parameter(\trajectory, \policy) \nabla_\parameter \log p_\parameter(\trajectory, \policy)^\top + \nabla^2_{\parameter \policy} \Phi_\parameter(\trajectory, \policy) \right] \label{eqn:policy-gradient-second-order}
\end{align}

Equation~\ref{eqn:policy-gradient-policy-hessian} and Equation~\ref{eqn:policy-gradient-second-order} both serve as unbiased estimates of the corresponding second-order derivatives. We can sample a set of trajectories to compute both of them and get an unbiased estimate of the second-order derivatives. This concludes the proof of Theorem~\ref{thm:policy-gradient-unbiased}.
\end{proof}

\bellmanUnbiased*
\begin{proof}[First part of the proof (first-order derivative)]
By the definition of $J_\parameter(\policy) = \frac{1}{2} \mathop{\mathbf{E}}\nolimits_{\trajectory \sim \policy, \parameter} \left[ \delta^2(\trajectory, \policy) \right]$, we can compute its first-order derivative by:
\begin{align}
    \nabla_\policy J_\parameter(\policy) &= \nabla_\policy \frac{1}{2} \mathop{\mathbf{E}}\nolimits_{\trajectory \sim \policy, \parameter} \left[ \delta_\parameter^2(\trajectory, \policy) \right] \nonumber \\
    &= \nabla_\policy \frac{1}{2} \int_{\trajectory} \delta_\parameter^2(\trajectory, \policy) p_\parameter(\trajectory, \policy) d \trajectory \nonumber \\
    &= \int_{\trajectory}  \left[ p_\parameter(\trajectory, \policy) \delta_\parameter(\trajectory, \policy) \nabla_\policy \delta_\parameter(\trajectory, \policy) + \frac{1}{2} \delta_\parameter^2(\trajectory, \policy) \nabla_\policy p_\parameter(\trajectory, \policy) \right] d \trajectory \nonumber \\
    &= \int_{\trajectory}  \left[ \delta_\parameter(\trajectory, \policy) \nabla_\policy \delta_\parameter(\trajectory, \policy) + \frac{1}{2} \delta_\parameter^2(\trajectory, \policy) \nabla_\policy \log p_\parameter(\trajectory, \policy) \right] p_\parameter(\trajectory, \policy) d \trajectory \nonumber \\
    &= \mathop{\mathbf{E}}\nolimits_{\trajectory \sim \policy, \parameter} \left[ \delta_\parameter(\trajectory, \policy) \nabla_\policy \delta_\parameter(\trajectory, \policy) + \frac{1}{2} \delta_\parameter^2(\trajectory, \policy) \nabla_\policy \log p_\parameter(\trajectory, \policy) \right] \label{eqn:bellman-gradient-first-order}
\end{align}
\end{proof}

\begin{proof}[Second part of the proof (second-order derivative)]
Given Equation~\ref{eqn:bellman-gradient-first-order}, we can further compute the second-order derivatives by:
\begin{align}
    \nabla^2_\policy J_\parameter(\policy) &= \nabla_\policy \nabla_\policy J_\parameter(\policy) \nonumber \\
    &= \nabla_\policy \mathop{\mathbf{E}}\nolimits_{\trajectory \sim \policy, \parameter} \left[ \delta_\parameter(\trajectory, \policy) \nabla_\policy \delta_\parameter(\trajectory, \policy) + \frac{1}{2} \delta_\parameter^2(\trajectory, \policy) \nabla_\policy \log p_\parameter(\trajectory, \policy) \right] \nonumber \\
    &= \nabla_\policy \int_{\trajectory}  \left[ \delta_\parameter(\trajectory, \policy) \nabla_\policy \delta_\parameter(\trajectory, \policy) + \frac{1}{2} \delta_\parameter^2(\trajectory, \policy) \nabla_\policy \log p_\parameter(\trajectory, \policy) \right] p_\parameter(\trajectory, \policy) d \trajectory \nonumber \\
    &= \int_{\trajectory} \left( \nabla_\policy \delta \nabla_\policy \delta^\top + \delta \nabla^2_\policy \delta + \delta \nabla \log p_\parameter \nabla_\policy \delta^\top + \frac{1}{2} \delta^2 \nabla^2 \log p_\parameter \right) p_\parameter \nonumber \\
    & \quad \quad \quad + \left( \delta \nabla_\policy \delta(\trajectory, \policy) + \frac{1}{2} \delta^2 \nabla_\policy \log p_\parameter \right) p_\parameter \nabla \log p_\parameter^\top \quad  d \trajectory \nonumber \\
    &= \mathop{\mathbf{E}}\nolimits_{\trajectory \sim \policy, \parameter} \left[ \nabla_\policy \delta \nabla_\policy \delta^\top + \delta \nabla^2_\policy \delta + \delta \nabla \log p_\parameter \nabla_\policy \delta^\top + \delta \nabla_\policy \delta(\trajectory, \policy)\nabla \log p_\parameter^\top + O(\delta^2) \right] \nonumber \\
    &= \mathop{\mathbf{E}}\nolimits_{\trajectory \sim \policy, \parameter} \left[ \nabla_\policy \delta \nabla_\policy \delta^\top + O(\delta) \right] \nonumber
\end{align}

Similarly, we have:
\begin{align}
    \nabla^2_{\parameter\policy} J_\parameter(\policy) &= \nabla_\parameter \nabla_\policy J_\parameter(\policy) \nonumber \\
    &= \nabla_\parameter \mathop{\mathbf{E}}\nolimits_{\trajectory \sim \policy, \parameter} \left[ \delta_\parameter(\trajectory, \policy) \nabla_\policy \delta_\parameter(\trajectory, \policy) + \frac{1}{2} \delta_\parameter^2(\trajectory, \policy) \nabla_\policy \log p_\parameter(\trajectory, \policy) \right] \nonumber \\
    &= \nabla_\parameter \int_{\trajectory}  \left[ \delta_\parameter \nabla_\policy \delta_\parameter + \frac{1}{2} \delta_\parameter^2 \nabla_\policy \log p_\parameter \right] p_\parameter d \trajectory \nonumber \\
    &= \int_{\trajectory}  \left( \nabla_\policy \delta \nabla_\parameter \delta^\top + \delta \nabla^2_{\parameter \policy} \delta + \delta \nabla_\policy \log p_\parameter \nabla_\parameter \delta^\top + \frac{1}{2} \delta^2 \nabla^2_{\parameter \policy} \log p_\parameter \right) p_\parameter  \nonumber \\
    & \quad \quad \quad + \left( \delta \nabla_\policy \delta + \frac{1}{2} \delta^2 \nabla_\policy \log p_\parameter \right) p_\parameter \nabla_\parameter \log p_\parameter^\top \quad d \trajectory \nonumber \\
    &= \mathop{\mathbf{E}}\nolimits_{\trajectory \sim \policy, \parameter} \left[  \nabla_\policy \delta \nabla_\parameter \delta^\top + \delta \nabla^2_{\parameter \policy} \delta + \delta \nabla_\policy \log p_\parameter \nabla_\parameter \delta^\top + \delta \nabla_\policy \delta \nabla_\parameter \log p_\parameter^\top + O(\delta^2) \right] \nonumber \\
    &= \mathop{\mathbf{E}}\nolimits_{\trajectory \sim \policy, \parameter} \left[  \nabla_\policy \delta \nabla_\parameter \delta^\top + \left( \nabla^2_{\parameter \policy} \delta + \nabla_\policy \log p_\parameter \nabla_\parameter \delta^\top +  \nabla_\policy \delta \nabla_\parameter \log p_\parameter^\top \right) \delta + O(\delta^2) \right]
\end{align}
which concludes the proof.
\end{proof}

\section{Additional Discussions of Decision-focused Learning}\label{sec:challenges}
In this section, we provide additional discussions of applying decision-focused learning to MDPs problems.

\subsection{Smoothness of the Optimal Policy Derived From Reinforcement Learning Solver}\label{sec:soft-rl-solver}
In Equation~\ref{eqn:full-gradient}, we compute the gradient of the final evaluation metric with respect to the predictive model by applying chain rule.
This implicitly requires each individual component in the chain rule to be well-defined. Specifically, the mapping from the MDP parameters to the optimal policy needs to be smooth so that we can compute a meaningful derivative of the policy with respect to the MDP parameters.
However, this smoothness requirement is only required in the training time to make the gradient computation available. Once the training is finished, there is no restriction on the policy and the corresponding solver. This smoothness requirement does not restrict the kind of problems that we can solve. We just need to find a solver that can give a smooth policy to ensure the differentiability at training time, e.g., soft actor critic and soft Q learning.


Specifically, the assumption on smooth policy is similar to the idea of soft Q-learning~\cite{haarnoja2017reinforcement} and soft actor-critic~\cite{haarnoja2018soft} proposed by Haarnoja et al. Soft Q-learning relaxes the Bellman equation to a soft Bellman equation to make the policy smoother, while soft actor-critic adds an entropy term as regularization to make the optimal policy smoother. These relaxed policy not only can make the training smoother as stated in the above papers, but also can allow back-propagation through the optimal policy to the input MDP parameters in our paper.
These benefits are all due to the smoothness of the optimal policy.
Similar issues arise in decision-focused learning in discrete optimization, with Wilder et al.~\cite{wilder2019melding} proposing to relaxing the optimal solution by adding a regularization term, which serves as the same purpose as we relax our optimal policy in the sequential decision problem setting.





\subsection{Unbiased Second-order Derivative Estimates}
As we discuss in Section~\ref{sec:hessian}, correctly approximating the second-order derivatives is the crux of our algorithm. Incorrect approximation may lead to incorrect gradient direction, which can further lead to divergence.
Since the second-order derivative formulation as stated in Theorem~\ref{thm:policy-gradient-unbiased} and Theorem~\ref{thm:bellman-unbiased} are both unbiased derivative estimate. However, their accuracy depends on how many samples we use to approximate the derivatives. In our experiments, we use $100$ sampled trajectories to approximate the second-order derivatives across three domains. The number of samples required to get a sufficiently accurate derivative estimate may depend on the problem size. Larger problems may require more samples to get a good derivative estimate, but more samples also implies more computation cost required to run the back-propagation.

In practice, we find that normalization effect of the Hessian term as discussed in Section~\ref{sec:hessian} is very important to reduce the variance caused by the incorrect derivative estimate. Additionally, we also notice that adding a small additional predictive loss term to run back-propagation can stabilize the training process because the predictive loss does not suffer from sampling variance.
This is why we add a weighted predictive loss to the back-propagation in Algorithm~\ref{alg:decision-focused-learning}.

\subsection{Impact of Optimality in the Forward Pass}
In order to differentiate through the KKT conditions, we need the policy $\policy^*$ return by the reinforcement learning solver to be optimal in Figure~\ref{fig:flowchart}. However, sub-optimal solution is often reached by the reinforcement learning solver and the optimality can impact the gradient computed from differentiating through the KKT conditions.

In this section, we analyze the impact of a sub-optimal policy produced by the reinforcement learning solver.
When the problem is smooth, or more precisely when the function $J_\theta(\pi)$ is smooth around the optimal policy $\pi^*$, we can bound the gradients $\nabla^2_\pi J_\theta(\pi')$ and $\nabla_\pi J_\theta(\pi')$ computed in Equation~\ref{eqn:full-gradient} using a sub-optimal policy $\pi'$ by the gradients computed using the optimal policy $\pi^*$. Specifically, if the Hessian $\nabla^2_\pi J_\theta(\pi^*)$ is sufficiently far from singular, the difference between two gradients computed from sub-optimal and optimal policy using Equation~\ref{eqn:full-gradient} can be written as:
\begin{align}
\left| \frac{d ~ \text{Eval}(\pi')}{d \pi} (\nabla^2_{\pi} J_{\theta}(\pi'))^{-1} \nabla^2_{\theta \pi} J_{\theta}(\pi') \frac{d \theta}{d \weight} 
    - \frac{d ~ \text{Eval}(\pi^*)}{d \pi} (\nabla^2_{\pi} J_{\theta}(\pi^*))^{-1} \nabla^2_{\theta \pi} J_{\theta}(\pi^*) \frac{d \theta}{d \weight} \right| \nonumber    
\end{align}
which can be further bounded by applying telescoping sum to decompose the difference into linear combination of the difference in each individual gradient term.
This suggests that when the smoothness condition of the above derivatives is met, we can bound the error incurred by sub-optimal policy.




\section{Experimental Setup}\label{sec:experiment-setup}

In this section, we describe how we randomly generate the MDP problems and the corresponding missing parameters.

\paragraph{Feature generation}
Across all three domains, once the missing parameters are generated, we feed each MDP parameter into a randomly initialized neural network with two intermediate layers each with $64$ neurons, and an output dimension size $16$ to generate a feature vector of size $16$ for the corresponding MDP parameter.
For example, in the gridworld example, each grid cell comes with a missing reward. So the feature corresponding to this grid cell and the missing reward is generated by feeding the missing reward into a randomly initialized neural network to generate a feature vector of size $16$ for this particular grid cell.
We repeat the same process for all the parameters in the MDP problem, e.g., all the grid cells in the gridworld problem.
The randomly initiated neural network uses ReLU layers as nonlinearity followed by a linear layer in the end.
The generated features are normalized to mean $0$ and variance $1$, and we add Gaussian noise $\mathcal{N}(0,1)$ to the features, with a signal noise ratio is $1:3$, to model that the original missing parameters may not be perfectly recovered from the noisy features.
The predictive model we use to map from generated noisy features to the missing parameters is a single layer neural network with $16$ neurons.

\paragraph{Training parameters}
Across all three examples, we consider the discounted setting where the discount factor is $\gamma = 0.95$.
The learning rate is set to be $\alpha = 0.01$.
The number of demonstrated trajectories is set to be $100$ in both the random and near-optimal settings.

\paragraph{Reinforcement learning solvers}
In order to train the optimal policy, in the gridworld example, we use tabular value-iteration algorithm to learn the Q value of each state action pair.
In the snare finding and the TB problems, since the state space is continuous, we apply DDQN~\cite{stable-baselines3,van2016deep} to train the Q function and the corresponding policy, where we use a neural network with two intermediate layers each with $64$ neurons to represent the function approximators of the Q values.
There is one exception in the runtime plot of the snare finding problem in Figure~\ref{fig:snare-runtime}, where the full Hessian computation is infeasible when a two layer neural network is used. Thus we use an one layer neural network with $64$ neurons only to test the runtime of different Hessian approximations.

\subsection{Gridworld Example With Missing Rewards}\label{sec:appendix-gridworld}
\paragraph{Problem setup}
We consider a $5 \times 5$ Gridworld environment with unknown rewards as our MDP problems with unknown parameters. The bottom left corner is the starting point and the top right corner is a safe state with a high reward drawn from a normal distribution $\mathcal{N}(5,1)$.
The agent can walk between grid cells by going north, south, east, west, or deciding to stay in the current grid cells.
So the number of available actions is $5$, while the number of available states is $5 \times 5 = 25$.

The agent collects reward when the agent steps on each individual grid cell. There is $20\%$ chance that each intermediate grid cell is a cliff that gives a high penalty drawn from another normal distribution $\mathcal{N}(-10, 1)$. All the other $80\%$ of grid cells give rewards drawn from $\mathcal{N}(0,1)$. The goal of the agent is to collect as much reward as possible. We consider a fixed time horizon case with $20$ steps, which is sufficient for the agent to go from bottom left to the top right corner.

\paragraph{Training details}
Within each individual training step for each MDP problem with missing parameters, we first predict the rewards using the predictive model, and then solve the resulting problem using tabular value-iteration. We run in total $10000$ iterations to learn the Q values, which are later used to construct the optimal policy.
To relax the optimal policy given by the RL solver, we relax the Bellman equation used to run value-iteration by relaxing all the argmax and max operators in the Bellman equation to softmax with temperature $0.1$, i.e., we use  $\text{SOFTMAX}(0.1 \cdot \text{Q-values})$ to replace all the argmax over Q values. The choice of the tempreratue $0.1$ is to make sure that the optimal policy is smooth enough but the relaxation does not impact the optimal policy too much as well.

\paragraph{Random and near-optimal trajectories generation}
To generate the random trajectories, we have the agent randomly select actions between all actions.
To generate the near-optimal trajectories, we replace the softmax with temperature $0.1$ by softmax with temperature $1$ and train an RL agent using ground truth reward values by $50000$ value-iterations to get a near-optimal policy.
We then use the trained near-optimal policy to generate $100$ independent trajectories as our near-optimal demonstrated trajectories.



\subsection{Snare Finding Problem With Missing Transition Probability} \label{sec:appendix-snare}
\paragraph{Problem setup}
In the snare finding problem, we consider a set of $20$ sites that are vulnerable to poaching activity. We randomly select $20\%$ of the sites as high-risk locations where the probability of having a poacher coming and placing a snare is randomly drawn from a normal distribution $\mathcal{N}(0.8, 0.1)$, while the remaining $80\%$ of low-risk sites with probability $\mathcal{N}(0.1, 0.05)$ having a poacher coming to place a snare.
These transition probabilities are not known to the ranger, and the ranger has to rely on features of each individual site to predict the corresponding missing transition probability.

We assume the maximum number of snare is $1$ per location, meaning that if there is a snare and it has not been removed by the ranger, then the poacher will not place an additional snare there until the snare is removed.
The ranger only observes a snare when it is removed. 
Thus the MDP problem with given parameters is partially observable, where the state maintained by the ranger is the belief of whether a site contains a snare or not, which is a fractional value between $0$ and $1$ for each site.

The available actions for the ranger are to select a site from $20$ sites to visit. If there is a snare in the location, the ranger successfully removes the snare and gets reward $1$ with probability $0.9$, and otherwise the snare remains there with a reward $-1$. If there is no snare in the visited site, the ranger gets reward $-1$.
Thus the number of actions to the ranger is $20$, while the state space is continuous since the ranger uses continuous belief as the state.

\paragraph{Training details}
To solve the optimal policy from the predicted parameters, we run DDQN with $1000$ iterations to collect random experience and  $10000$ iterations to train the model. We use a replay buffer to store all the past experience that the agent executed before.
To soften the optimal policy, we also use a relaxed Bellman equation as stated in Section~\ref{sec:appendix-snare}.
Because the cumulative reward and the corresponding Q values in this domain is relatively smaller than the Gridworld domain, we replace all the argmax and max operators by softmax with a larger temperature $1$ to reflect the relatively smaller reward values.

\paragraph{Random and near-optimal trajectories generation}
Similar to Section~\ref{sec:appendix-gridworld}, we generate the random trajectories by having the agent choose action from all available actions uniformly at random.
To generate the near-optimal trajectories, we replace all the softmax with temperature $1$ by softmax with temperature $5$, and we use the ground truth transition probabilities to train the RL agent by DDQN using $50000$ iterations to generate a near-optimal policy.
The near-optimal trajectories are then generated by running the trained near-optimal policy for $100$ independent runs.

\subsection{Tuberculosis With Missing Transition Probability}
\paragraph{Problem setup}
There are a total of 5 patients who need to take their medication at each time-step for 30 time-steps. For each patient, there are 2 states -- non-adhering (0), and adhering (1). The patients are assumed to start from a non-adhering state. Then, in subsequent time-steps, the patients' states evolve based on their current state and whether they were intervened on by a healthcare worker according to a transition matrix.

The raw transition probabilities for different patients are taken from~\cite{killian2019learning}.\footnote{The raw transition probabilities taken from~\cite{killian2019learning} are only used to generate synthetic patients.
}
However, these raw probabilities do not record a patient's responsiveness to an intervention. To incorporate the effect of intervening, we sample numbers from $U(0, 0.4)$, and (a) add them to the probability of adhering when intervened on, and (b) subtract them from the probability of adhering when not. Finally, we clip the probabilities to the range of $[0.05, 0.95]$ and re-normalize. We use the raw transition probabilities and the randomly generated intervention effect to model the behavior of our synthetic patients and generate all the training trajectories accordingly. The entire transition matrix for each patient is then fed as an input to the feature generation network to generate features for that patient. In this example, we assume the transition matrices to be missing parameters, and try to learn a predictive model to recover the transition matrices from the generated features using either two-stage or various decision-focused learning methods as discussed in the main paper.

Given the synthetic patients, we consider a healthcare worker who has to choose one patient at every time-step to intervene on. However, the healthcare worker can only observe the `true state' of a patient when she intervenes on them. At every other time, she has a `belief' of the patient's state that is constructed from the most recent observation and the predicted transition probabilities. Therefore, the healthcare worker has to learn a policy that maps from these belief states to the action of whom to intervene on, such that the sum of adherences of all patients is maximised over time. The healthcare worker gets a reward of $1$ for an adhering patient and $0$ for a non-adhering one. Like Problem \ref{sec:appendix-snare}, this problem has a continuous state space (because of the belief states) and discrete action space.

\paragraph{Training details}
Same as Section~\ref{sec:appendix-snare}.

\paragraph{Random and near-optimal trajectories generation}
Similar to Section~\ref{sec:appendix-snare}, we generate the random trajectories by having the agent choose action from all available actions uniformly at random.
To generate the near-optimal trajectories, we replace all the softmax with temperature $5$ by softmax with temperature $20$,\footnote{The reason that we use a relatively larger temperature is because the range of the cumulative reward in TB domain is smaller than the previous two domains. In TB domain, the patients could change from non-adhering back to adhering even if there is no intervention, while in contrast, a snare placed in a certain location will not be removed until the ranger visits the place. In other words, the improvement that intervention can introduce is relatively limited compared to the snare finding domain. Thus even though the cumulative reward in TB domain is larger than the previous two domains, the range is smaller and thus we need a larger temperature.} and we use the ground truth transition probabilities to train the RL agent by DDQN using $100,000$ iterations to generate a near-optimal policy.
The near-optimal trajectories are then generated by running the trained near-optimal policy for $100$ independent runs.

\section{Additional Experiment Results}

\begin{figure*}
\centering
\begin{minipage}{\textwidth}
    \centering
    \begin{subfigure}{\linewidth}
        \includegraphics[width=\textwidth]{figures/legend_short.pdf}
    \end{subfigure}
    \begin{subfigure}{0.45\linewidth}
        \centering
        \includegraphics[width=\textwidth]{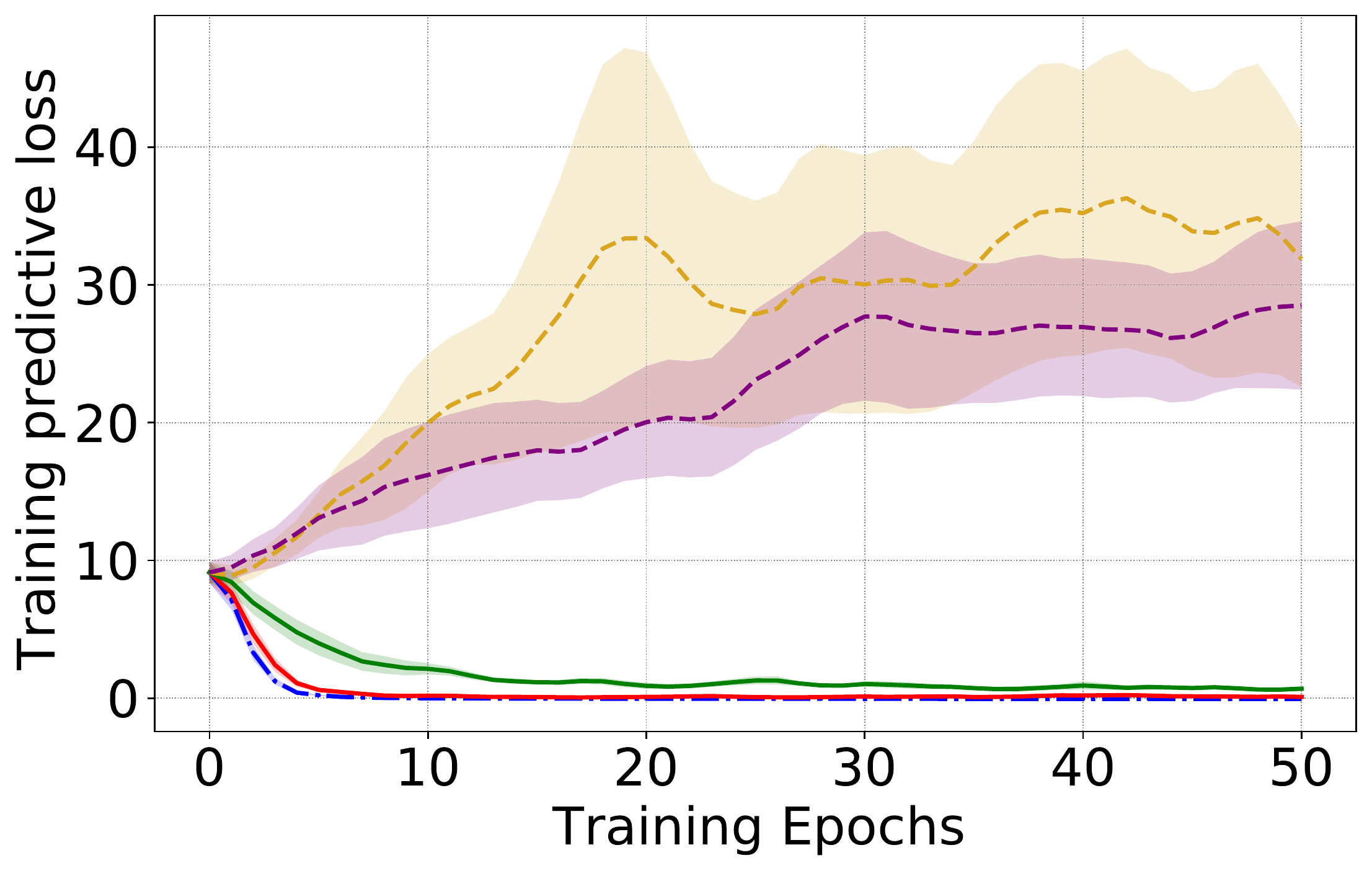}
        \caption{Predictive loss on training MDPs}
        \label{fig:tb-training-loss}
    \end{subfigure}
    \hfill
    \begin{subfigure}{0.45\linewidth}
        \centering
        \includegraphics[width=\textwidth]{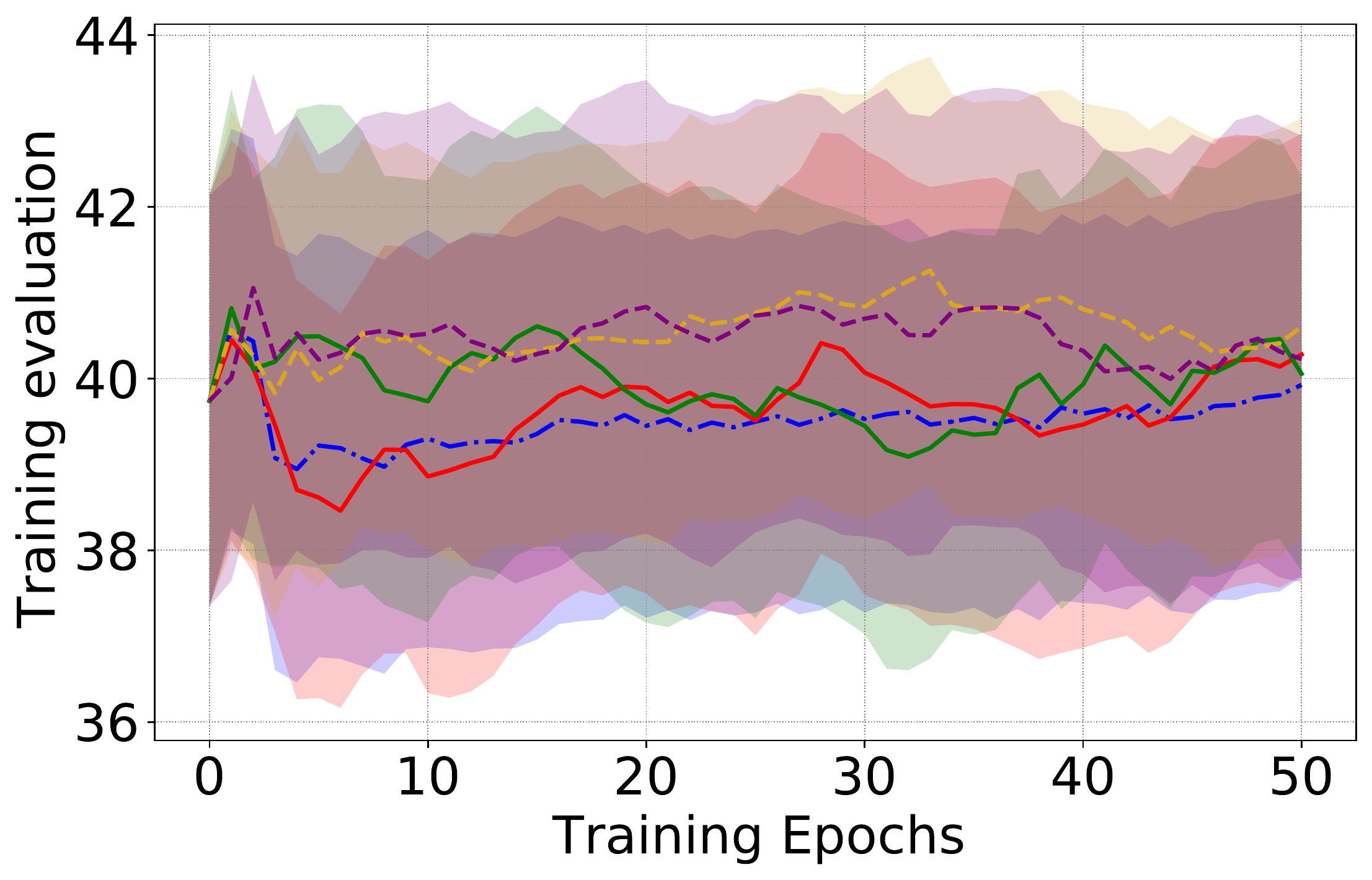}
        \caption{Performance on training MDPs}
        \label{fig:tb-training-eval}
    \end{subfigure}
\end{minipage}
    \caption{Learning curves of for the TB problem with random trajectories.}
    \label{fig:tb}
\end{figure*}

\paragraph{Tuberculosis Adherence}
The results for this problem can be found in Table \ref{table:ope-trajectories}, and the training curves can be found in Figure \ref{fig:tb}. While the standard errors associated with the results \textit{seem} very large, this is in large part because of the way in which we report them. To keep it consistent with other problems, we average the absolute OPE scores for each method across multiple problem instances. However, in the TB case, each problem instance can be very different because the patients in each of these instances are sampled from the transition probabilities previously studied in~\cite{killian2019learning} that have diverse patient behaviour. As a result, the baseline OPE values vary widely across different problem instances, causing a larger variation in Figure~\ref{fig:tb-training-eval} and contributing as the major source of standard deviation.

\begin{figure*}
    \begin{subfigure}{0.45\linewidth}
        \centering
        \includegraphics[width=\textwidth]{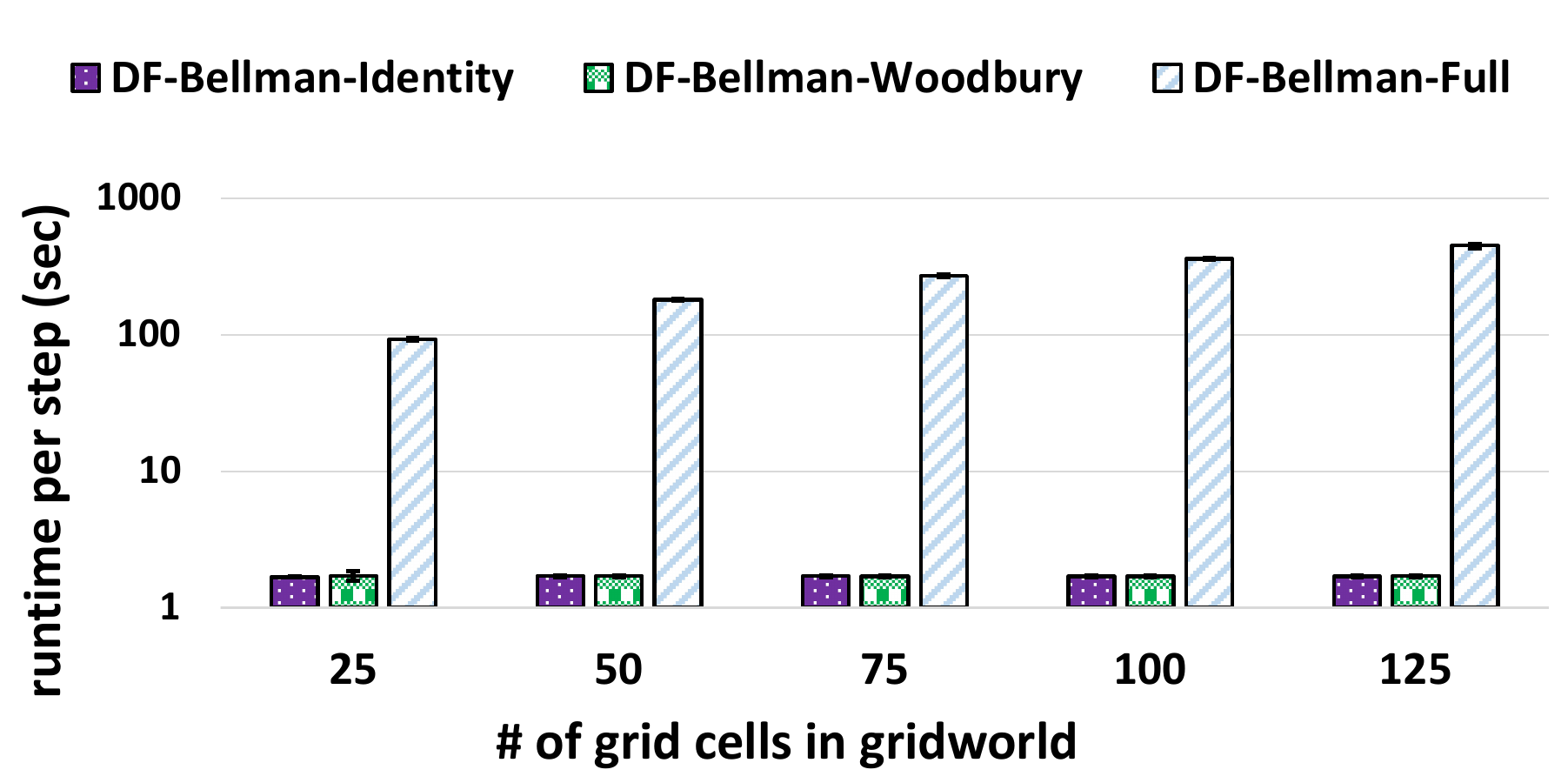}
        \caption{Backpropagation runtime per gradient step of Bellman equation-based decision-focused learning using different Hessian approximations in the gridworld problem.}
        \label{fig:gridworld-runtime-bellman}
    \end{subfigure}
    \hfill
    \begin{subfigure}{0.45\linewidth}
        \centering
        \includegraphics[width=\textwidth]{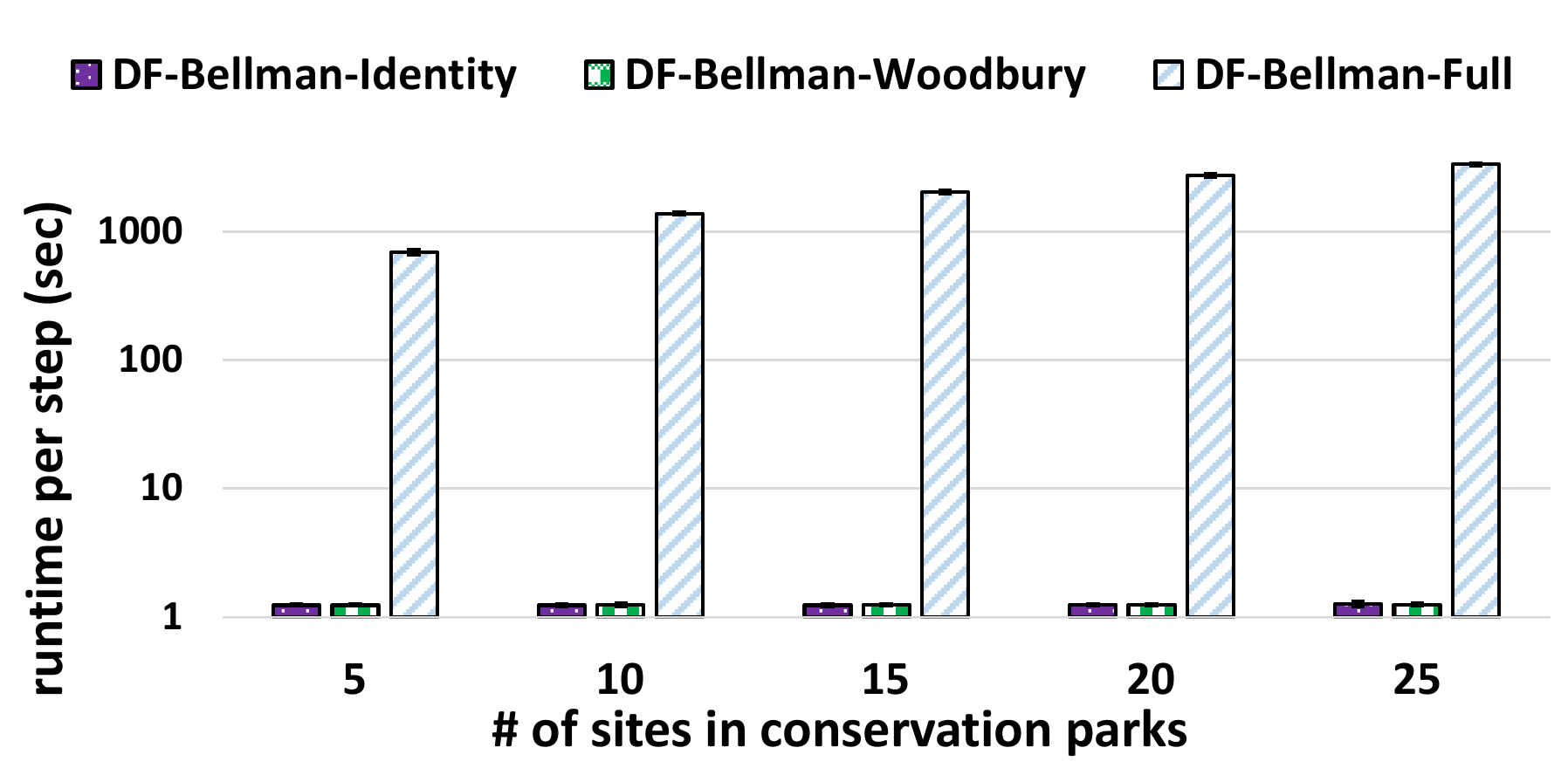}
        \caption{Backpropagation runtime per gradient step of Bellman equation-based decision-focused learning using different Hessian approximations in the snare finding problem.}
        \label{fig:snare-runtime-bellman}
    \end{subfigure}
    \caption{We compare the backpropagation runtime of decision-focused learning methods using bellman optimality with different Hessian approximations. We can see that the runtime of both identity and Woodbury methods largely outperform the runtime of full Hessian computation, demonstrating the importance of the Hessian approximation. Additionally, the runtime of Woodbury method using low-rank approximation is similar to the runtime of identity method. Woodbury method provides a more accurate approximation with a similar runtime.}
    \label{fig:runtime-bellman}
\end{figure*}
\paragraph{Computation cost of Bellman equation-based decision-focused methods} We additionally compare the runtime of the operation of backpropagation per gradient step of Bellman equation-based decision-focused learning using different Hessian approximations. This is the runtime required to compute the gradient in the backward pass. We can see that the runtime of methods using identity and Woodbury methods are much smaller than the runtime of full Hessian approximation. This matches to our analysis in Section~\ref{sec:hessian} and the experimental results in Figure~\ref{fig:gridworld-runtime} and Figure~\ref{fig:snare-runtime}. 

\paragraph{Choice of regularizer $\lambda$ in Algorithm~\ref{alg:decision-focused-learning} and ablation study}

\begin{figure}
    \centering
    \begin{subfigure}{\linewidth}
        \centering
        \includegraphics[width=0.8\textwidth]{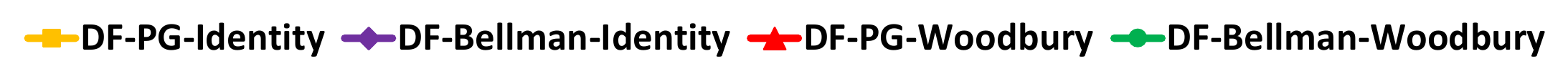}
    \end{subfigure}
    \begin{subfigure}{0.49\linewidth}
        \centering
        \includegraphics[width=\textwidth]{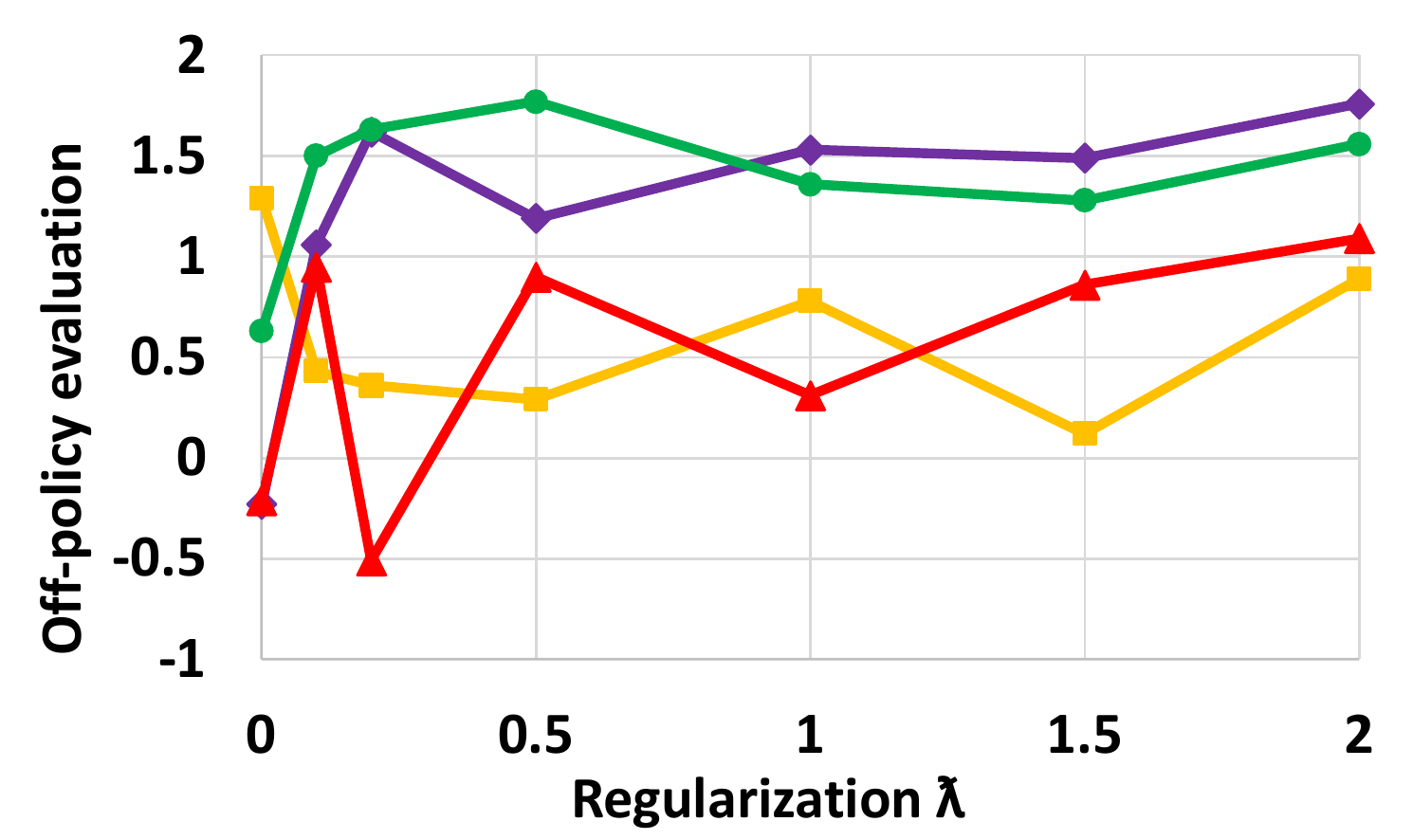}
        \caption{The off-policy evaluation performance of different regularization $\lambda$ in snare-finding problem with random trajectories.}
        \label{fig:ablation-random}
    \end{subfigure}
    \hfill
    \begin{subfigure}{0.49\linewidth}
        \centering
        \includegraphics[width=\textwidth]{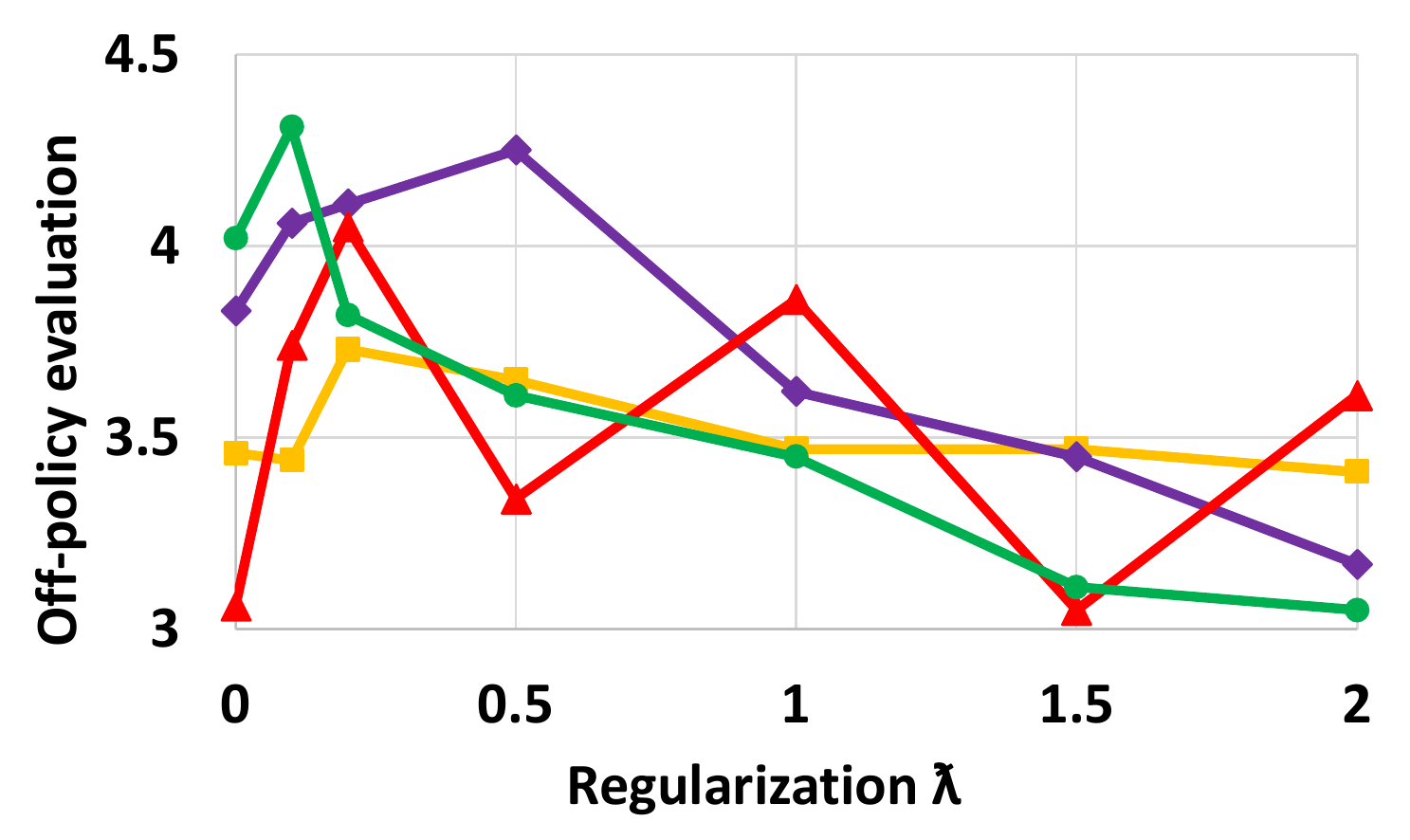}
        \caption{The off-policy evaluation performance of different regularization $\lambda$ in snare-finding problem with near-optimal trajectories.}
        \label{fig:ablation-near-optimal}
    \end{subfigure}
    \caption{Ablation study of different regularization $\lambda$ in Algorithm~\ref{alg:decision-focused-learning} on the snare-finding problem using different decision-focused learning methods.}
    \label{fig:ablation}
\end{figure}

We ran an ablation study by varying the regularization constant $\lambda$ in Algorithm~\ref{alg:decision-focused-learning} using the snare-finding problem. The experimental result is shown in Figure~\ref{fig:ablation}.
The role of regularization in Algorithm~\ref{alg:decision-focused-learning} is to help resolve the non-convexity issue of the off-policy evaluation (OPE) objective. Decision-focused learning methods can easily get trapped by various local minima due to the non-convexity of the OPE metric. Adding a small two-stage loss can improve the convexity of the optimizing objective and thus help improve the performance as well.
We can see that adding small amount of regularization can usually help improve the overall performance in both cases with random and near-optimal trajectories. However, adding too much regularization in Algorithm~\ref{alg:decision-focused-learning} can push decision-focused learning toward two-stage approach, which can degrade the performance sometimes. The right amount of regularization is critical to balance between the issue of convexity and the optimizing objective.

\section{Computing Infrastructure}\label{sec:computing-infrastructure}
All experiments except were run on a computing cluster, where each node is configured with 2 Intel Xeon
Cascade Lake CPUs, 184 GB of RAM, and 70 GB of local scratch space.
Within each experiment, we did not implement parallelization nor use GPU, so each experiment was purely run on a single CPU core.

\end{document}